\documentclass{article}

\usepackage{microtype}
\usepackage{graphicx}
\usepackage{subcaption}
\usepackage{booktabs} 

\PassOptionsToPackage{hyphens}{url}
\usepackage{xurl}
\usepackage{hyperref}

\usepackage[accepted]{icml2026}

\usepackage{amsmath}
\usepackage{amssymb}
\usepackage{mathtools}
\usepackage{amsthm}

\usepackage[capitalize,noabbrev]{cleveref}

\theoremstyle{plain}
\newtheorem{theorem}{Theorem}[section]
\newtheorem{proposition}[theorem]{Proposition}
\newtheorem{lemma}[theorem]{Lemma}

\theoremstyle{definition}

\newtheorem{assumption}[theorem]{Assumption}
\theoremstyle{remark}

\usepackage[textsize=tiny]{todonotes}

\icmltitlerunning{ReMoE: Boosting Expert Reuse through Router Fine-Tuning in Memory-Constrained MoE LLM Inference}

\begin{document}

\twocolumn[
  \icmltitle{ReMoE: Boosting Expert Reuse through Router Fine-Tuning in Memory-Constrained MoE LLM Inference}




  \begin{icmlauthorlist}
    \icmlauthor{Xiongwei Zhu}{buaa}
    \icmlauthor{Xiaojian Liao}{buaa}
    \icmlauthor{Tianyang Jiang}{huawei}
    \icmlauthor{Yusen Zhang}{buaa}
    \icmlauthor{Liang Wang}{buaa}
    \icmlauthor{Limin Xiao}{buaa}
  \end{icmlauthorlist}


  \icmlaffiliation{buaa}{School of Computer Science and Engineering, Beihang University, Beijing 100191, China}
  \icmlaffiliation{huawei}{Huawei Technologies Ltd}

  \icmlcorrespondingauthor{Xiaojian Liao}{liaoxj@buaa.edu.cn}

  \icmlkeywords{Mixture-of-Experts, On-Device Inference, Expert Offloading, Temporal Locality, Hardware-Aware Fine-Tuning, I/O Optimization, Fine-Grained MoE}

  \vskip 0.3in
]



\printAffiliationsAndNotice{}  

\begin{abstract}


Fine-grained Mixture-of-Experts (MoE) models sparsely activate only a subset of experts per token, reducing activated computation while maintaining high model capacity. However, in memory-constrained inference scenarios, only a small set of experts can be cached. Experts not in the cache must be fetched from slow external storage (e.g., UFS), leading to frequent evictions and substantial I/O overhead.
We propose ReMoE, a router fine-tuning framework designed to boost
token-wise expert reuse. ReMoE biases the router toward recently
selected experts, producing temporally stable routing that better
matches cache locality constraints. By increasing short-horizon expert
reuse, ReMoE reduces expert fetches from storage without adding
inference-time computation.
Experiments on DeepSeek and Qwen models show that ReMoE improves expert
reuse by 26\% while maintaining downstream task performance.
Real-system evaluations further confirm these benefits, improving
output throughput by 8.4\% under vLLM GPU--CPU expert offloading
and reducing TPOT by 43.6--49.8\% under llama.cpp on Jetson Orin NX,
corresponding to a 1.77--1.99$\times$ decode speedup across diverse
workloads.
Checkpoints and usage instructions are available at
\url{https://github.com/BUAA-OSCAR/ReMoE}.

\end{abstract}

\section{Introduction}
\label{sec:intro}

Edge-side deployment of Large Language Models (LLMs) is an emerging trend~\cite{xu2024ondevicelanguagemodelscomprehensive, zheng2025reviewedgelargelanguage, Wang2025Empowering}, with on-device AI applications such as real-time translation and advanced photo editing already demonstrating its practical value~\cite{xue2024powerinfer2}. While Mixture-of-Experts (MoE) architectures are central to scaling LLM capacity, their deployment has so far remained predominantly cloud-based, with limited use on
edge devices. Recent efforts on on-device MoE models, edge-oriented inference runtimes, and mobile MoE applications suggest that this situation is beginning to change, making MoE increasingly relevant to edge and memory-constrained deployments~\cite{oppo2024moe,mediatek2024dimensity9400,nvidia2026tensorrt_edge,liquid2025lfm2_8b_a1b,allenai2025olmoe_app,google2026gemma4}.

This trend is supported by two factors. First, MoE's sparse activation keeps only a subset of experts active during inference, preserving large model capacity while limiting the activated parameter footprint. Second, advances in mobile storage make local weight storage increasingly feasible. For example, Samsung's UFS 4.0 offers read speeds up to 4 GB/s and capacities up to 1 TB, making local storage of large model weights increasingly feasible~\cite{samsung_ufs4_0}. However, deploying MoE LLMs on edge devices introduces the challenge of frequent expert switching. During the decoding phase, where each token may activate a different set of experts, this results in frequent, irregular I/O requests to load the required expert weights from storage, prolonging the inference latency~\cite{qu2025mobileedgeintelligencelarge}.


Current solutions typically attempt to hide this latency through system-level techniques such as prefetching or caching algorithms. A key upstream factor is the routing trace produced by the MoE router, i.e., the sequence of selected expert sets $\{E_t\}_{t=1}^{T}$ across decoding steps. Many standard MoE training recipes include load-balancing objectives that spread tokens across experts to improve training-time utilization under expert parallelism. While useful for large-scale training, such dispersion can be misaligned with memory-constrained single-request decoding, where inference benefits when adjacent tokens reuse part of the same expert working set. ReMoE addresses this training--deployment mismatch by shaping the routing trace itself, complementing runtime caching and prefetching.

\begin{figure}[ht]
    \centering
    \includegraphics[width=\linewidth]{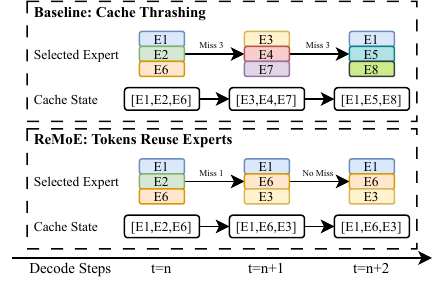} 
    \caption{\textbf{Comparison of expert I/O dynamics.}
    \textbf{Baseline:} Standard routers select disjoint experts across steps, causing frequent cache replacements and I/O thrashing.
    \textbf{ReMoE:} Our method encourages temporal locality by increasing expert reuse across adjacent steps, thereby reducing cache turnover and I/O overhead.}
    \label{fig:teaser}
\end{figure}

To bridge this gap, we propose ReMoE, a lightweight router fine-tuning framework that aligns routing behavior with memory capacity constraints on edge devices. ReMoE adapts the router using two complementary objectives: (i) a temporal locality loss that encourages expert reuse across adjacent tokens, and (ii) a Trust-KL loss that softly anchors the updated routing distribution to the pretrained router. This biases the router toward temporally stable reuse while constraining distributional drift, transforming fragmented routing traces into more cache-friendly sequences. Crucially, ReMoE does not modify the model architecture, hardware, or inference kernels, and adds no runtime policy beyond using the fine-tuned router weights.

We evaluate ReMoE across fine-grained MoE models and heterogeneous
serving platforms. ReMoE increases expert overlap by 26.4\% on
DeepSeek-V2-Lite and by 27.2\% on Qwen1.5-MoE-A2.7B. We further
perform trace-driven offline cache analysis using unique hit rate
(uHR), the cache-hit fraction over distinct expert requests, and total
unique misses (\#uMiss), the number of distinct expert loads. ReMoE
improves uHR and reduces \#uMiss across cache capacities and
replacement policies. In real-system evaluations, ReMoE improves
output throughput by 8.4\% and reduces TPOT by 4.5\% under vLLM
GPU--CPU expert offloading. On Jetson Orin NX, where expert misses are
more expensive under SSD-backed edge inference, ReMoE reduces TPOT by
43.6--49.8\% across ShareGPT, GSM8K, and HumanEval, corresponding to a
1.77--1.99$\times$ decode speedup.

\section{Related Work}
\label{sec:related_work}

\textbf{MoE routing, load balancing, and locality.}
Mixture-of-Experts (MoE) scales model capacity by activating only a few experts per token~\cite{shazeer2017outrageously,lepikhin2021gshard,fedus2022switch,du2022glam}.
Most MoE models use Top-$K$ token-choice routing with auxiliary balancing losses to prevent collapse and improve utilization~\cite{lepikhin2021gshard,fedus2022switch}.
Recent work explores alternative routing objectives and mechanisms, such as router z-loss for stability~\cite{zoph2022stmoe}, expert-choice routing for better balance and efficiency~\cite{zhou2022expertchoice}, and auxiliary-loss-free balancing strategies~\cite{wang2024auxfree}.
Prior analyses study when learning to route helps and how routing variants affect quality~\cite{dikkala2023benefitsrouting}, with surveys summarizing modern MoE routing and training practices~\cite{cai2025moesurvey}.
Fine-grained MoEs, including DeepSeek-V2/V3 and Qwen MoE, increase specialization but also amplify token-wise expert switching~\cite{liu2024deepseekv2,qwen2024qwenmoe,liu2025deepseekv3,yang2025qwen3technicalreport}.
Oracle-MoE addresses the resulting locality problem by redesigning the routing architecture and training from scratch to preserve expert activation consistency~\cite{zhou2025oraclemoe}.
ReMoE targets the same locality bottleneck from a different angle: rather than modifying the architecture or requiring pretraining, it fine-tunes only the existing gate parameters of an already-trained MoE checkpoint, making it a lightweight post-training adaptation that leaves the model architecture and expert weights unchanged.

\textbf{System support for memory-constrained MoE inference.}
System efficiency for MoE inference depends on dispatch, communication, expert parallelism, and expert-weight movement; frameworks such as DeepSpeed-MoE and Tutel/MegaBlocks optimize dispatch and expert-parallel execution costs~\cite{rajbhandari2022deepspeedmoe,hwang2022tutel,gale2023megablocks,lin2025indepthanalysiscachingprefetching}.
Under memory constraints, MoE offloading systems reduce expert-transfer
overhead through caching, offloading, and CPU--GPU orchestration, as
shown by MoE-Infinity, HOBBIT, FineMoE, KTransformers, MoE-Lightning,
and Fiddler~\cite{xue2025moeinfinity,tang2024hobbit,
yu2025taminglatency,liang2025modelssuit,ktransformers,
moelightning,fiddler}.
CoServe shows that expert-based collaborative inference also suffers from memory-tier switching overhead, motivating dependency-aware scheduling and expert management~\cite{10.1145/3676641.3715986}.
Some cache-aware routing methods, such as Mixture of Cache-Conditional Experts, bias expert selection using cache residency at inference time, directly trading off cache hits and routing choices during decoding~\cite{skliar2025mixture}.
ReMoE is complementary to these runtime methods: it reshapes the routing trace offline through router fine-tuning, so the deployed model can use the same inference graph and standard cache policies.
Although related memory-constrained serving systems have also been studied for dense LLMs~\cite{sheng2023flexgen,alizadeh2024llminaflash,xue2024powerinfer2,jiang2024neo,jiang2025kvpr,du2025flexinfer,bian2025tokencakekvcachecentricservingframework}, ReMoE focuses on the MoE-specific problem of token-wise expert switching and cache locality.
At the storage layer, prior systems improve I/O efficiency through write-dependency disentanglement, multicore flash-file-system scalability, and crash-consistent NVMe support~\cite{liao2020horae,liao2021max,liao2021ccnvme}; these optimizations are complementary to ReMoE, which reduces the upstream expert-access demand generated by the router.

\textbf{Deployment-aware training and compression.}
A common principle is to incorporate deployment constraints during training so inference remains simple, such as latency-/hardware-aware pruning and architecture optimization~\cite{shen2022halp,kurtic2023ziplm}.
For model compression, quantization reduces memory footprint and bandwidth demand through post-training quantization and low-bit inference methods such as SmoothQuant, GPTQ, AWQ, and ZeroQuant, as well as low-bit fine-tuning methods such as QLoRA~\cite{xiao2024smoothquant,frantar2023gptq,lin2024awq,yao2022zeroquant,dettmers2023qlora}.
Recent fine-grained quantization and algorithm--accelerator co-design methods further mitigate outliers and salient weights to improve low-bit LLM inference efficiency~\cite{xie2025amove,xie2026lowbit}.
Quantization-aware training has also become practical for improving low-bit quality under deployment constraints~\cite{liu2024llmqat,chen2025efficientqat,esser2025silq}.
ReMoE follows the deployment-aware principle at the router level: it freezes all non-router parameters and fine-tunes only the gate to encourage short-horizon expert reuse, aligning routing with cache locality without added inference-time computation.
Because expert weights remain frozen, ReMoE is orthogonal to parameter-space optimizations: improved reuse reduces how often experts are fetched, while low-bit experts reduce the cost of each fetch.

\section{The Locality Gap in MoE Offloading} 
\label{sec:problem_analysis} 

We study fine-grained MoE decoding under memory-constrained, single-request inference on edge devices, where fast memory is limited. Since modern MoE LLMs can require tens of GB for weights alone, while edge DRAM must also accommodate runtime buffers and KV cache, only a small subset of experts can remain resident in fast memory. The remaining experts must be fetched from slower storage (e.g., UFS) on demand, making expert-weight movement a first-order bottleneck. In contrast to datacenter serving, this regime typically operates with \textbf{$B{=}1$} (interactive usage), leaving little opportunity to amortize I/O latency across batches. For clarity, our cache analysis adopts a request-isolated setting where each prompt starts from a cold expert cache. As a result, the step-to-step stability of routed experts becomes a primary determinant of end-to-end latency.

\subsection{Preliminaries and Notation}
\label{sec:setting_notation}

\textbf{MoE routing formulation.}
An MoE layer contains $N_r$ routed experts $\{e_i\}_{i=1}^{N_r}$ and a router with parameters $\theta_{\text{gate}}$.
Given hidden state $h_t$, the router computes $P_t=\mathrm{Softmax}(h_t^\top \theta_{\text{gate}})$ and selects $E_t=\mathrm{Top\text{-}K}(P_t)$ with $|E_t|=K$.
Shared experts (if present) are always activated and are excluded from $P_t$.

\textbf{Inference setting.}
We consider autoregressive decoding for a sequence of length $T$ under memory-constrained, single-request inference ($B{=}1$).
We focus on expert-granularity weight movement and model a per-layer expert cache of capacity $C$ (experts/layer) in fast memory.
At step $t$, the requested working set is $E_t$; cache hits avoid weight loads, while misses trigger expert fetches from storage.

\begin{table}[t]
\centering
\caption{\textbf{Key notation.}
Shared experts are excluded from $P_t$.}
\label{tab:notation}
\small
\setlength{\tabcolsep}{3.2pt}
\renewcommand{\arraystretch}{1.05}
\begin{tabular}{@{}l p{0.58\columnwidth}@{}}
\toprule
\textbf{Symbol} & \textbf{Meaning} \\
\midrule
\multicolumn{2}{@{}l@{}}{\textbf{Baseline MoE / Inference \& Cache} (Sec.~\ref{sec:problem_analysis})} \\
$t,\,T$ & decoding step; total steps \\
$h_t$ & hidden state at step $t$ \\
$N_r$ & routed experts per MoE layer \\
$P_t \in \mathbb{R}^{N_r}$ & routing distribution over routed experts \\
$E_t$ & selected expert index set \\
$C$ & per-layer expert cache capacity \\
$\mathrm{IR}_t$ & Instantaneous Reuse \\
$\mathrm{EOR}$ & Expert Overlap Ratio \\
\midrule
\multicolumn{2}{@{}l@{}}{\textbf{ReMoE / Router Fine-Tuning} (Sec.~\ref{sec:methodology})} \\
$\theta_{\text{gate}}$ & trainable gate parameters \\
$\theta_{\text{gate}}^{0}$ & frozen snapshot of pretrained gate \\
$P_t^{\mathrm{ref}}$ & reference distribution from $\theta_{\text{gate}}^{0}$ \\
$L_{\text{Trust}}$ & KL anchor between $P_t$ and $P_t^{\mathrm{ref}}$ \\
$L_{\text{Loc}}$ & temporal locality regularization \\
$\lambda_{\text{KL}},\,\{\lambda_\cdot\}$ & weights for trust/locality terms \\
\bottomrule
\end{tabular}
\end{table}

\subsection{Metrics for Offloading Efficiency}
\label{sec:metrics}

We quantify how cache-friendly a routing trace is $\{E_t\}_{t=1}^{T}$, without assuming a particular caching or prefetching policy.
Since decoding is sequential and cache state evolves over time, we evaluate reuse at the level of adjacent steps.

\textbf{Instantaneous reuse and Expert Overlap Ratio (EOR).}
We quantify short-horizon routing locality by the step-to-step overlap
\begin{equation}
\mathrm{IR}_t = \frac{|E_t \cap E_{t-1}|}{K},
\qquad
\mathrm{EOR} = \frac{1}{T-1}\sum_{t=2}^{T}\mathrm{IR}_t,
\end{equation}
where larger values indicate stronger expert reuse across adjacent decoding steps, implying fewer on-demand expert fetches under a cache.

\begin{proposition}
\label{prop:eor_bound}
Consider a per-layer expert cache of capacity $C\ge K$ with a recency-based replacement policy (e.g., LRU) and request-isolated decoding.
Let $E_t$ be the Top-$K$ routed expert set at step $t$.
Then the number of expert fetches at step $t$ satisfies
$
N_{\mathrm{fetch}}(t)\le K-|E_t\cap E_{t-1}|
=K(1-\mathrm{IR}_t),
$
and thus the average fetches satisfy $\bar{N}_{\mathrm{fetch}}\le K(1-\mathrm{EOR})$.
\end{proposition}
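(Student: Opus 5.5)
The plan is to show that, right after step $t-1$ has been served, every expert in $E_{t-1}$ sits in the cache. Then every expert in $E_t\cap E_{t-1}$ is a hit at step $t$, and only the remaining $|E_t\setminus E_{t-1}| = K-|E_t\cap E_{t-1}|$ requested experts can miss. Since $|E_t|=K$, this gives $N_{\mathrm{fetch}}(t)\le |E_t\setminus E_{t-1}| = K(1-\mathrm{IR}_t)$ by the definition of $\mathrm{IR}_t$.

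\textbf{Residency of $E_{t-1}$.} I would argue that at the end of step $t-1$ the cache contains all of $E_{t-1}$. At step $t-1$, each expert in $E_{t-1}$ is either a hit or is fetched and inserted. Under a recency-based policy such as LRU, an eviction triggered while serving step $t-1$ removes the least recently used resident. Once an expert of $E_{t-1}$ has been touched during step $t-1$, it is among the most recently used entries. At most $K\le C$ such entries exist, so the policy can always evict some entry outside $E_{t-1}$ first.

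This is the step I expect to need the most care. I would make explicit the mild assumption that the $K$ experts of one step are admitted as a batch, or equivalently that the policy never evicts a member of the working set currently being served. This is exactly where $C\ge K$ is needed. For $t=1$, under request isolation, the bound is read with $E_0=\emptyset$, giving $N_{\mathrm{fetch}}(1)\le K$. Alternatively, I would restrict the statement to $t\ge 2$ so that it matches the range of $\mathrm{EOR}$.

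\textbf{Averaging.} For the average, I would sum the per-step inequality over $t=2,\dots,T$ and divide by $T-1$:
\[
\bar N_{\mathrm{fetch}}=\frac{1}{T-1}\sum_{t=2}^{T}N_{\mathrm{fetch}}(t)\le \frac{1}{T-1}\sum_{t=2}^{T}K(1-\mathrm{IR}_t)=K(1-\mathrm{EOR}).
\]
This uses only linearity of the average and the definition of $\mathrm{EOR}$. I would also note that the bound is policy-agnostic beyond the residency property. Any policy guaranteeing that the previous step's working set stays resident, which recency-based policies do when $C\ge K$, inherits the same inequality. A larger $C$ can only produce additional hits from older steps, so the inequality can be strict.
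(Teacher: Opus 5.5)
Your proposal is correct and follows essentially the same route as the paper: establish $E_{t-1}\subseteq\mathcal{C}_t$, intersect with $E_t$ to get $|E_t\cap\mathcal{C}_t|\ge|E_t\cap E_{t-1}|$, and average over $t=2,\dots,T$. The paper states residency directly as assumptions (step atomicity/admission $E_t\subseteq\mathcal{C}_{t+1}$, plus cache isolation, which rules out evictions between steps such as prefetch insertions) rather than deriving it from LRU; your argument also relies on that isolation condition implicitly, so you should state it explicitly.
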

\noindent
A proof is provided in Appendix~\ref{app:err_proof}, which also lists concrete failure modes when the assumptions break. We treat this result as motivating analysis: it shows EOR is a meaningful proxy for I/O efficiency under standard caching semantics, but does not directly constrain ReMoE's optimization—which operates on differentiable distribution-level surrogates (Sec.~\ref{sec:methodology})—and the primary validation comes from experiments (Sec.~\ref{sec:evaluation}).

\begin{figure*}[t]
    \centering
    \includegraphics[width=\textwidth]{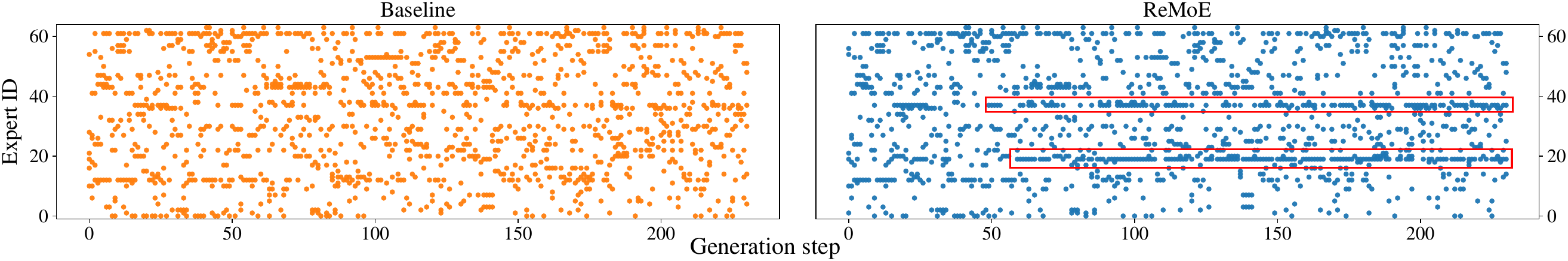}
    \caption{\textbf{Routing trajectories under teacher forcing (21st MoE layer of DeepSeek-V2-Lite).}
    We trace Top-$K$ expert indices over decoding steps for a fixed token sequence.
    The baseline already shows short stretches of potential reuse but exhibits frequent switching, while ReMoE---via gate-only fine-tuning---extends these streaks and stabilizes the working set, increasing short-horizon overlap.
    With inputs fixed, the difference reflects a change in routing policy rather than generation divergence.}
    \label{fig:trajectory}
\end{figure*}

\subsection{The Training--Inference Mismatch}
\label{sec:mismatch}

Standard MoE training often uses an auxiliary load-balancing loss
$L_{\text{aux}}$ to spread tokens across experts for expert-parallel
training. This objective can conflict with memory-constrained
single-request decoding, where reusing a compact expert working set
reduces expert fetches. Figure~\ref{fig:trajectory} shows that the
baseline router has short reuse streaks but frequent step-to-step
switching, indicating exploitable natural locality. ReMoE targets this
mismatch by increasing short-horizon overlap while anchoring the router
to the pretrained distribution. A moderate increase in inference-time
routing imbalance is acceptable in our $B{=}1$ setting because there is
no expert parallelism to protect, and local concentration directly
reduces distinct expert loads.

\section{ReMoE: Internalizing Expert Cache Locality via Router Fine-Tuning}
\label{sec:methodology}

ReMoE reshapes MoE routing to be more cache-friendly without modifying expert parameters or the inference runtime.
We freeze all non-router weights---including embeddings, attention blocks, and expert FFNs---and fine-tune only the gate parameters $\theta_{\text{gate}}$.
As a result, ReMoE is lightweight to train and preserves the baseline inference graph, incurring zero runtime overhead at deployment.

\textbf{Scope and indexing.}
We apply the same objective to every MoE gate in the model and average the losses across MoE layers and token positions.
We use teacher forcing during fine-tuning, so the input token sequence and the time index $t$ are fixed; however, the hidden states $h_t$ are still produced by the current model (and can change as routing changes).
Our regularizers operate directly on router outputs $\{P_t\}$, encouraging temporally local routing while keeping the router semantically anchored to the pretrained behavior.

\subsection{Overview}
Figure~\ref{fig:method} illustrates ReMoE within a single MoE layer.
Given the token hidden state $h_t$, we run a frozen reference router and a trainable router in parallel to obtain
$P_t^{\mathrm{ref}}$ and $P_t$ (Step~\textbf{1}).
Concretely, we store a frozen FP32 snapshot of the pretrained gate weights $\theta_{\text{gate}}^{0}$ and compute
$P_t^{\mathrm{ref}}=\mathrm{Softmax}(h_t^\top \theta_{\text{gate}}^{0})$, while updating only $\theta_{\text{gate}}$. We then optimize the trainable router with two signals (Step~\textbf{2}):
(i) a semantic anchor that keeps $P_t$ close to $P_t^{\mathrm{ref}}$,
and (ii) a temporal locality signal that relates $P_t$ to a short routing history.
Only the trainable gate receives gradients (Step~\textbf{3}).
We maintain a small history buffer of recent routing outputs (or the corresponding Top-$K$ sets) to construct locality targets for subsequent steps (Step~\textbf{4}).
Finally, the selected Top-$K$ experts execute exactly as in the baseline (Step~\textbf{5}).

\textbf{From cache locality to a router objective.}
Our goal is to reduce expert offloading under a small per-layer cache.
As shown in Sec.~\ref{sec:metrics}, step-to-step overlap (EOR/IR) provides an upper bound on fetches under recency-based caching, motivating higher adjacent-step reuse.
ReMoE reshapes the routing trace and is thus complementary to cache replacement and prefetching, which can handle residual long-tail misses beyond capacity $C$.

However, the discrete Top-$K$ operator $E_t=\mathrm{Top\text{-}K}(P_t)$ is non-differentiable, so we optimize a smooth surrogate based on the reuse mass that $P_t$ assigns to previously selected experts.

Let $\tilde{E}_{t-1} = \texttt{stop\_gradient}(E_{t-1})$ denote the previous-step routed set treated as a constant.
The \texttt{stop\_gradient} operator treats the previous Top-$K$ set as fixed, so gradients flow only through the current routing distribution $P_t$.
This provides a one-way reuse signal: the current step is encouraged to reuse the previously realized expert set.

We then define the reuse mass as
\begin{equation}
m_t = \frac{1}{K}\sum_{k \in \tilde{E}_{t-1}} P_t^{(k)} .
\label{eq:reuse_mass_def}
\end{equation}
A larger $m_t$ means $P_t$ assigns higher probability to experts that were activated at step $t{-}1$, which increases the likelihood that the next routed set $E_t$ overlaps with $E_{t-1}$.
ReMoE further combines this surrogate with smoothness/inertia/working-set terms (Sec.~\ref{sec:temporal_locality}) to suppress both high-frequency jitter and slow drift in routing trajectories. Further justification for optimizing reuse mass as a differentiable surrogate for step-to-step Top-$K$ overlap is provided in Appendix~\ref{app:reuse_mass_surrogate}.

\begin{figure}[ht]
    \centering
    \includegraphics[width=\linewidth]{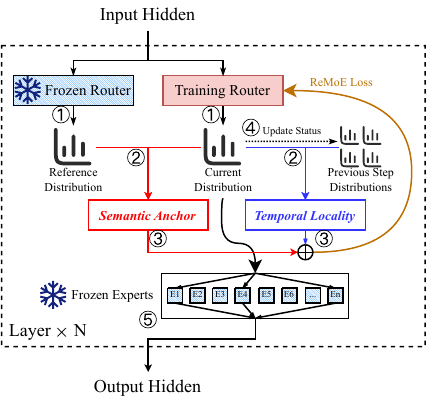}
    \caption{\textbf{Overview of ReMoE (per MoE layer).}
    A frozen snapshot gate provides $P_t^{\mathrm{ref}}$ to anchor semantics, while a trainable gate is optimized with temporal-locality regularization using a short routing history buffer; only gate parameters are updated.}
    \label{fig:method}
\end{figure}

\subsection{Training Objective}
\label{sec:objective_overview}

\textbf{Balancing locality and semantic drift.}
Our goal is to improve cache locality while preserving the routing semantics learned during pretraining.
We therefore optimize a single weighted objective: temporal-locality regularizers encourage reuse, and a KL penalty to a frozen snapshot router acts as a soft trust-region that limits semantic drift.
This design needs neither a separate teacher model nor any inference-time modifications.

We keep the base causal language modeling signal and add router-specific regularization.
Let $L_{\text{CE}}$ be the standard next-token cross entropy loss.
During router fine-tuning, we disable the standard MoE load-balancing loss ($L_{\text{aux}}{=}0$) because it explicitly encourages expert dispersion, which conflicts with cache locality under memory constraints.

Our full objective is:
\begin{equation}
\mathcal{L}
=
L_{\text{CE}}
+
\lambda_{\text{KL}}\,L_{\text{Trust}}
+
\alpha_t\,L_{\text{Loc}},
\label{eq:full_objective}
\end{equation}
where $L_{\text{Trust}}$ is a semantic anchor (Sec.~\ref{sec:semantic_anchor}),
$L_{\text{Loc}}$ is the temporal locality loss (Sec.~\ref{sec:temporal_locality}),
$\lambda_{\text{KL}}$ controls the strength of the anchor,
and $\alpha_t \in [0,1]$ linearly warms up locality regularization during early training
(e.g., $\alpha_t=\min(1, t/T_{\text{warm}})$ for training step $t$ and warmup length $T_{\text{warm}}$).


\subsection{Semantic Anchor}
\label{sec:semantic_anchor}

The semantic anchor prevents the router from drifting in a way that harms model quality.
At token position $t$, the frozen router applied to the current hidden state produces a reference routing distribution
$P_t^{\mathrm{ref}}$ (Figure~\ref{fig:method}, Step~\textbf{1}).

\textbf{Trust-KL loss.}
We anchor the trainable routing distribution $P_t$ to $P_t^{\mathrm{ref}}$ using the Kullback--Leibler (KL) divergence, which measures the discrepancy between two probability distributions.
For distributions $P$ and $Q$ over $N_r$ experts,
\begin{equation}
D_{\mathrm{KL}}(P\Vert Q)=\sum_{k=1}^{N_r} P^{(k)}\log\frac{P^{(k)}}{Q^{(k)}}.
\end{equation}
We define
\begin{equation}
L_{\text{Trust}}
=
\frac{1}{T}
\sum_{t=1}^{T}
D_{\mathrm{KL}}\!\Big(
P_t
\,\Vert\,
\texttt{stop\_gradient}(P_t^{\mathrm{ref}})
\Big).
\label{eq:trust_kl}
\end{equation}

Here $P_t^{\mathrm{ref}}$ is treated as a fixed reference (no gradients flow through the snapshot branch).
KL is a natural fit because routing is probabilistic: it directly penalizes distributional drift, places more weight on changes to high-probability experts (which dominate Top-$K$ decisions), and is commonly used as a soft trust-region in distillation and policy optimization~\cite{hinton2015distillingknowledgeneuralnetwork,schulman2017trustregionpolicyoptimization}. Anchoring at the distribution level rather than at hidden states applies the constraint exactly at the decision boundary; Appendix~\ref{app:trust_region} interprets Trust-KL as a soft trust region and gives conditions under which Top-$K$ stability holds under bounded drift.

\textbf{Robustness and scope.}
Because the anchor is evaluated on the current $h_t$ at every
step, $P_t^{\mathrm{ref}}$ adapts to context shifts and the locality
bias does not override expert switches required by abrupt semantic
transitions---consistent with the Trust-KL ablation in
Sec.~\ref{sec:ablation}, where removing the anchor improves reuse but
degrades language-modeling quality. By the same property, on out-of-distribution
domains the expected effect is attenuation of the cache-efficiency
gain rather than quality degradation, since expert weights remain
frozen and Trust-KL bounds drift from the pretrained router. ReMoE
is therefore best understood as a deployment-aware routing objective
rather than a domain adaptation method.

\textbf{Architecture-agnostic design.}
ReMoE operates at the distribution level: it uses router outputs $P_t$ (or scores that can be normalized into a distribution) and the selected indices $E_t$.
It is agnostic to the gate parameterization and expert implementation, since experts are never updated.
As long as a model exposes token-wise routing outputs and a selection operator (Top-$K$, Top-1/Switch, etc.), the same locality-and-trust shaping objective applies.

\subsection{Temporal Locality Regularization}
\label{sec:temporal_locality}

Temporal locality regularization reduces token-level routing changes so that the router reuses experts more often.
We define the locality loss as a weighted sum:
\begin{equation}
\begin{split}
L_{\text{Loc}} &=
\lambda_{\text{Reuse}}\,L_{\text{Reuse}}
+
\lambda_{\text{Smooth}}\,L_{\text{Smooth}} \\
\quad &+
\lambda_{\text{Lag}}\,L_{\text{Lag}}
+
\lambda_{\text{WS}}\,L_{\text{WS}}.
\end{split}
\label{eq:loc_sum}
\end{equation}
Here, $L_{\text{Reuse}}$ directly increases short-horizon reuse;
$L_{\text{Smooth}}$ suppresses high-frequency jitter;
$L_{\text{Lag}}$ suppresses slow drift across longer horizons;
and $L_{\text{WS}}$ encourages a compact local working set, which is important for small caches.

\textbf{Reuse loss $L_{\text{Reuse}}$}.
We encourage $P_t$ to place mass on experts selected at the previous step.
Using the reuse mass $m_t$ in Eq.~\eqref{eq:reuse_mass_def}, we aggregate reuse at the sequence level:
\begin{equation}
\rho
=
\frac{1}{T-1}
\sum_{t=2}^{T}
m_t,
\quad
L_{\text{Reuse}}
=
-\log(\rho + 10^{-8}).
\label{eq:reuse_loss}
\end{equation}
The small constant $10^{-8}$ is added for numerical stability, preventing $\log(0)$ when $\rho$ is very small and avoiding excessively large gradients early in training.
This form increases overall reuse while still allowing occasional necessary switches (since it does not force every step to reuse).

\textbf{Smoothness loss $L_{\text{Smooth}}$}.
Top-$K$ routing can change abruptly when several experts have similar scores.
To reduce such step-to-step jitter, we encourage the routing distribution to change smoothly between adjacent steps.
We use the symmetric KL divergence (a symmetric measure of distributional change):
\begin{equation}
\text{SymKL}(P,Q)
=
\frac{1}{2}
\Big(
D_{\mathrm{KL}}(P\Vert Q)
+
D_{\mathrm{KL}}(Q\Vert P)
\Big),
\label{eq:symkl_def}
\end{equation}
and penalize adjacent changes:
\begin{equation}
L_{\text{Smooth}}
=
\frac{1}{T-1}
\sum_{t=2}^{T}
\text{SymKL}(P_t,P_{t-1}).
\label{eq:smooth_loss}
\end{equation}
If $P_t$ moves less from one token to the next, Top-$K$ boundaries are crossed less often, improving short-horizon overlap.
Unlike the reuse term, which uses the previous routed set as a fixed target, the smoothness term is a purely geometric regularizer on the routing trajectory.
It penalizes discrepancies between consecutive distributions, so it must compare $P_t$ and $P_{t-1}$ directly.
We do not apply \texttt{stop\_gradient} here because we want the penalty to propagate to both steps: $P_t$ should be close to $P_{t-1}$ and, symmetrically, $P_{t-1}$ should be close to $P_t$.
This bidirectional coupling yields a stable temporal smoothing effect along the whole sequence, rather than fitting the current step to a fixed past target.

\textbf{Lagged inertia loss $L_{\text{Lag}}$}.
Smoothness only compares adjacent steps and may miss slow drift:
$P_t$ can change slightly each step but still migrate to different experts over many tokens.
To suppress this, we align $P_t$ with several earlier distributions using a small lag set $\mathcal{D}$ (e.g., $\{1,2,4,8,16\}$):
\begin{equation}
L_{\text{Lag}}
=
\frac{1}{T-1}
\sum_{t=2}^{T}
\frac{1}{|\mathcal{D}|}
\sum_{\substack{d\in\mathcal{D}\\ t-d\ge 1}}
\text{SymKL}(P_t,P_{t-d}).
\label{eq:lag_loss}
\end{equation}
The factor $1/|\mathcal{D}|$ normalizes the loss across lags.
While $L_{\text{Smooth}}$ suppresses adjacent-step jitter, $L_{\text{Lag}}$ curbs slower multi-step drift.

\textbf{Working-set compression loss $L_{\text{WS}}$}.
Reuse and smoothness alone do not prevent the router from gradually visiting many experts over a longer span.
We therefore encourage routing to concentrate within local windows.
For a window size $W$, we average distributions within each window:
\begin{equation}
\bar{P}_b = \frac{1}{W} \sum_{j=1}^{W} P_{(b-1)W + j},
\quad
b = 1, \dots, n,\quad n=\lfloor T/W \rfloor.
\label{eq:window_avg}
\end{equation}
We then minimize the entropy of the window-averaged distribution, where
$H(P)=-\sum_{k=1}^{N_r} P^{(k)}\log P^{(k)}$ measures how spread a distribution is (smaller means more concentrated):
\begin{equation}
L_{\text{WS}}
=
\frac{1}{n}\sum_{b=1}^{n} H(\bar{P}_b).
\label{eq:ws_loss}
\end{equation}
This encourages each local window to rely on a smaller subset of experts, aligning routing with small cache capacities, while the Trust-KL term limits pathological collapse.

\textbf{Routing imbalance.}
The locality regularizers, especially $L_{\text{Reuse}}$ and $L_{\text{WS}}$, can increase the load-balance coefficient of variation (CV) by concentrating routing decisions.
This trade-off is acceptable in our target setting ($B{=}1$, no expert parallelism), where local concentration reduces distinct expert loads; the Trust-KL anchor limits excessive collapse.

\section{Evaluation}
\label{sec:evaluation}

We evaluate ReMoE along four dimensions:
(i) routing locality and inference-time expert balance;
(ii) trace-driven cache efficiency under standard replacement policies;
(iii) real-system serving latency under expert offloading; and
(iv) capability preservation and attribution against generic router
continued fine-tuning.

\subsection{Experimental Setup}
\label{sec:experimental_setup}

\textbf{Models.}
Unless otherwise specified, we use DeepSeek-V2-Lite, a fine-grained
MoE LLM with 15.7B total parameters and 2.4B activated parameters per
token. The model has 27 Transformer layers, of which 26 are MoE layers
after the first dense layer. Each MoE layer contains 64 routed experts
plus 2 shared experts and uses Top-$K{=}6$ routing.

\textbf{Data and preprocessing.}
We fine-tune on OpenHermes-2.5~\cite{OpenHermes2.5}, a multi-turn instruction/chat corpus spanning general chat, reasoning, code, and math.
We serialize each example into a role-prefixed transcript (``User:'', ``Assistant:'') and append end-of-sequence (EOS) token.
We use 100{,}000 samples for training and 1{,}000 held-out samples for evaluation.

\textbf{Fine-tuning setup.}
We fine-tune for 2{,}000 steps with AdamW (learning rate $5\times10^{-5}$, linear warmup 200 steps), BF16, gradient clipping 1.0, and sequence length 2048.
We use micro-batch size 1 with gradient accumulation 8.

\textbf{Loss weights and schedules.}
We use the full ReMoE objective with a warmup schedule for locality regularization; all hyperparameters (including $\mathcal{D}$ and $W$) follow Appendix~\ref{app:implementation_details}.

\textbf{Baselines and ablations.}
We compare against the pretrained router (Baseline) and a
cross-entropy-only (CE-only) router fine-tuning baseline.
CE-only uses the same data, optimizer, training length, and
frozen-parameter setting as ReMoE, but optimizes only
$L_{\text{CE}}$, i.e., the standard next-token cross-entropy
loss, without the temporal-locality regularizers or Trust-KL
anchor; this isolates the effect of the locality objective from
generic continued router adaptation. For ablations, we remove one component at a time while keeping the recipe fixed:  w/o Trust ($\lambda_{\text{KL}}{=}0$), w/o Reuse ($\lambda_{\text{reuse}}{=}0$), and w/o Consistency ($\lambda_{\text{smooth}}{=}\lambda_{\text{lag}}{=}\lambda_{\text{ws}}{=}0$).

\textbf{Real-system serving setup.}
We evaluate ReMoE under two serving-side expert-offloading settings.
First, we use vLLM~\citep{kwon2023efficient} with the MoE
expert-offloading implementation~\citep{vllm_pr37190} 
on a 24GB GPU connected through a PCIe Gen3 $\times$16
host-device link.
We set
\texttt{max-num-seqs=1}, \texttt{moe-expert-cache-size=6},
disable prefix caching and chunked prefill, and evaluate with
concurrency 1. Second, we evaluate an edge-oriented setup on
Jetson Orin NX 16GB using \texttt{llama.cpp}~\citep{llamacpp}.
The model is stored on an aigo NVMe SSD DP35 256GB and connected through PCIe Gen3 $\times$4.

\textbf{Expert-cache simulation.}
For routing-locality and cache-efficiency evaluation, we focus on
$B{=}1$ autoregressive decoding and record the Top-$K$ expert
indices at each decode step. EOR is computed directly from these
routing traces. To estimate cache pressure independently of a
specific serving implementation, we run a per-layer expert-cache
simulator with capacity $C$ (experts/layer) under standard
replacement policies, using the recorded routing traces as input.
At each decode step, the Top-$K$ experts form one request:
resident experts count as hits, while non-resident experts count
as loads and may trigger evictions.


\subsection{Routing Locality and Inference-Time Expert Imbalance}
\label{sec:routing_locality}

\textbf{Trajectory.}
Figure~\ref{fig:trajectory} visualizes Top-$K$ expert activations over decoding steps.
Compared to the baseline, ReMoE produces longer contiguous expert streaks and fewer abrupt switches, suggesting reduced routing jitter.

\textbf{Metrics and trade-off.}
Table~\ref{tab:routing_locality} quantifies this effect.
ReMoE increases EOR from 27.3\% to 34.5\% (\(\Delta{=}{+}7.2\) points).
Routing becomes moderately more concentrated: entropy decreases from 0.9998 to 0.9971 (\(\Delta{=}{-}0.0027\)), and the load-balance CV increases from 0.0409 to 0.1608 (\(\Delta{=}{+}0.1199\)).
This pattern matches the design of ReMoE: locality regularizers encourage reuse, while the trust objective constrains the router from drifting too far from the pretrained routing distribution. Sequence-level expert diversity is preserved: unique experts visited per sequence changes negligibly ($64.000\to63.997$), confirming that the concentration is step-level rather than a global routing collapse.

\begin{table}[t]
\centering
\caption{\textbf{Routing metrics under teacher forcing ($B{=}1$).}
Rel.\ $\Delta$ is $(\text{ReMoE}-\text{Baseline})/\text{Baseline}$.}
\label{tab:routing_locality}
\small
\setlength{\tabcolsep}{6pt}
\begin{tabular}{lccc}
\toprule
Method & EOR $\uparrow$ & Entropy $\downarrow$ & CV $\uparrow$ \\
\midrule
Baseline & 27.3\% & 0.9998 & 0.0409 \\
CE-only  & 22.9\% & 0.9998 & 0.0392 \\
ReMoE    & 34.5\% & 0.9971 & 0.1608 \\
Rel.\ $\Delta$ & +26.4\% & $-0.27$\% & +293.1\% \\
\bottomrule
\end{tabular}
\end{table}

\subsection{Cache Efficiency}
\label{sec:cache_efficiency}

We use a trace-driven per-layer expert cache simulator with request-level resets, i.e., each prompt starts from an empty expert cache.
We sample 128 prompts from \texttt{ShareGPT\_V3\_unfiltered},
run greedy decoding with $B{=}1$ and
\texttt{max\_new\_tokens}{=}64, and record the Top-$K$
expert indices at each decode step.
At each layer-step, we deduplicate repeated expert indices within the Top-$K$ routing result and treat the resulting distinct expert set as one cache request.
We report unique hit rate (uHR) and total unique misses (\#uMiss), aggregated over all MoE layers and prompts.

\begin{table}[t]
\centering
\caption{\textbf{Expert cache efficiency under LRU.}
$\Delta$ is ReMoE$-$Baseline. \#uMiss is reported in millions.
Full LFU/FIFO, Belady, and TPOT-proxy results are in
Appendix~\ref{app:cache_full}.}
\label{tab:cache_efficiency}
\small
\setlength{\tabcolsep}{3.5pt}
\renewcommand{\arraystretch}{1.05}
\resizebox{\linewidth}{!}{
\begin{tabular}{ccccccc}
\toprule
$C$ & uHR & uHR-R & $\Delta$uHR
& \#uMiss & \#uMiss-R & $\Delta$\#uMiss \\
\midrule
4  & 0.2058 & 0.2374 & +0.0316 & 1.0150 & 0.9746 & -0.0404 \\
6  & 0.3187 & 0.3687 & +0.0500 & 0.8707 & 0.8068 & -0.0639 \\
8  & 0.3629 & 0.4142 & +0.0513 & 0.8141 & 0.7486 & -0.0655 \\
12 & 0.4519 & 0.5035 & +0.0516 & 0.7005 & 0.6345 & -0.0660 \\
\bottomrule
\end{tabular}}
\end{table}

Table~\ref{tab:cache_efficiency} shows that ReMoE
consistently reduces expert loads under LRU. At the
Top-$K$-matched setting $C{=}6$, uHR improves from
0.3187 to 0.3687, while \#uMiss drops from 0.8707M to
0.8068M. Similar improvements under LFU/FIFO and
Belady's optimal policy, as well as the corresponding
step-level TPOT proxy, are reported in
Appendix~\ref{app:cache_full}.

\subsection{Real-System Serving Evaluation}
\label{sec:real_serving}

\textbf{vLLM expert offloading.}
Table~\ref{tab:vllm_serving} reports serving metrics
alongside CE-only for attribution. ReMoE improves output
throughput from 3.58 to 3.88~tok/s, reduces mean TPOT from
254.31~ms to 242.99~ms, and raises the average per-layer
unique-expert hit rate from 39.4\% to 43.3\%. CE-only is
substantially worse than both the pretrained baseline and
ReMoE, indicating that the serving gain is not explained by
generic router continued fine-tuning.

\begin{table}[t]
\centering
\caption{Evaluation with vLLM expert offloading
(RTX~3090,
\texttt{moe-expert-cache-size=6},
ShareGPT prompts).}
\label{tab:vllm_serving}
\resizebox{\linewidth}{!}{%
\begin{tabular}{lcccc}
\toprule
Method & Tok/s $\uparrow$ & TTFT (ms) $\downarrow$ &
        TPOT (ms) $\downarrow$ & uHR $\uparrow$ \\
\midrule
Baseline & 3.58 & 769.23 & 254.31 & 39.4\% \\
CE-only  & 2.95 & 780.12 & 286.82 & 21.1\% \\
ReMoE    & 3.88 & 758.27 &
           242.99 & 43.4\% \\
\midrule
vs.\ Baseline & +8.4\% & $-$1.4\% & $-$4.5\% & +3.9\,pp \\
\bottomrule
\end{tabular}}
\end{table}

\textbf{Edge evaluation on Jetson Orin NX.}
Table~\ref{tab:jetson} reports results on Jetson Orin
NX~16GB with \texttt{llama.cpp}, where the model is stored
on NVMe SSD and experts are served through a slower
storage path under a tighter memory budget. ReMoE
consistently reduces both TTFT and TPOT across all three
workload types.

\begin{table}[t]
\centering
\caption{Edge evaluation on Jetson Orin NX~16GB +
\texttt{llama.cpp} (\texttt{-np~1}, \texttt{-n~128},
\texttt{--mmap}). All latencies are in ms. $\Delta$ denotes the
relative change from Baseline to ReMoE, computed as
$(\text{ReMoE}-\text{Base})/\text{Base}$; negative values indicate
latency reduction. Decode speed is TPOT$_{\text{Base}}$ /
TPOT$_{\text{ReMoE}}$.}
\label{tab:jetson}
\resizebox{\linewidth}{!}{%
\begin{tabular}{lccccccc}
\toprule
& \multicolumn{3}{c}{TTFT (ms) $\downarrow$}
& \multicolumn{3}{c}{TPOT (ms) $\downarrow$}
& Decode \\
\cmidrule(lr){2-4}\cmidrule(lr){5-7}
Workload & Base & ReMoE & $\Delta$
         & Base & ReMoE & $\Delta$ & Speed \\
\midrule
ShareGPT  & 7150.12 & 5368.77 & $-$24.9\%
          &  554.69 &  306.27 & $-$44.8\% & 1.81$\times$ \\
GSM8K     & 6041.65 & 4736.70 & $-$21.6\%
          &  613.73 &  346.04 & $-$43.6\% & 1.77$\times$ \\
HumanEval & 7185.78 & 5233.11 & $-$27.2\%
          &  672.68 &  337.61 & $-$49.8\% & 1.99$\times$ \\
\bottomrule
\end{tabular}}
\end{table}

\textbf{Analysis.}
The vLLM result provides conservative server-side
validation: the PCIe host-device path partially hides
expert-cache miss cost, limiting the observable gain. The
Jetson result better represents our target setting: with
slower SSD-backed expert storage, cache misses are more
expensive and ReMoE's reduction in unique expert loads
(Sec.~\ref{sec:cache_efficiency}) translates into
43.6--49.8\% steady-state TPOT reduction across workloads.

\subsection{Capability Preservation}
\label{sec:capability_preservation}

We evaluate downstream tasks with \texttt{lm-eval-harness} (\texttt{lm\_eval})~\cite{eval-harness}, reporting GSM8K exact match (strict and flexible)~\cite{cobbe2021trainingverifierssolvemath}, HumanEval pass@1~\cite{chen2021evaluatinglargelanguagemodels},  MMLU~\cite{hendrycks2021measuringmassivemultitasklanguage}, and IFEval~\cite{zhou2023instructionfollowingevaluationlargelanguage} accuracy.
Baseline and ReMoE use the same backend and task configurations.

\textbf{Results.}
Table~\ref{tab:capability} shows that ReMoE preserves overall capability.
MMLU remains essentially unchanged (\(\Delta{=}{+}0.09\) pp).
HumanEval improves from 26.83\% to 29.27\%.
GSM8K slightly decreases (strict: \(\Delta{=}{-}0.76\) pp; flexible: \(\Delta{=}{-}0.68\) pp), within the reported uncertainty.
Overall, we do not observe evidence that the locality gains in Sec.~\ref{sec:routing_locality}--\ref{sec:cache_efficiency} come at the cost of benchmark performance.

\begin{table}[t]
\centering
\caption{\textbf{Capability on standard benchmarks (lm-eval).}
$\Delta$ is ReMoE$-$Baseline in percentage points.}
\label{tab:capability}
\renewcommand{\arraystretch}{1.05}
\resizebox{\linewidth}{!}{%
\begin{tabular}{lcccc}
\toprule
Benchmark (metric) & Baseline & CE-only & ReMoE & $\Delta$ (pp) \\
\midrule
GSM8K (EM, strict)  & 38.89 & 36.92 & 38.13 & $-0.76$ \\
GSM8K (EM, flex)    & 39.04 & 37.23 & 38.36 & $-0.68$ \\
HumanEval (pass@1)  & 26.83 & 28.05 & 29.27 & $+2.44$ \\
MMLU (acc)          & 57.72 & 57.44 & 57.81 & $+0.09$ \\
\midrule
IFEval (prompt loose)  & 17.93 & --- & 17.93 & $\phantom{+}0.00$ \\
IFEval (prompt strict) & 17.19 & --- & 16.08 & $-1.11$ \\
\bottomrule
\end{tabular}}
\end{table}

\subsection{Attribution: Locality Objective vs.\
            Generic Router Adaptation}
\label{sec:attribution}

A possible alternative explanation for ReMoE's gains is
that any continued fine-tuning of the router on
OpenHermes-2.5 would produce similar improvements.
The CE-only column in Table~\ref{tab:capability} rules
this out: CE-only router fine-tuning, which uses identical
training conditions without the locality objective or
Trust-KL anchor, reduces EOR to 0.2293---below the
pretrained baseline (0.2730)---and degrades GSM8K scores
relative to the baseline without improving MMLU or
HumanEval. The serving results in
Table~\ref{tab:vllm_serving} show the same pattern:
CE-only worsens throughput and TPOT to below-baseline
levels. The locality gain therefore requires the explicit
locality-aware objective; generic router adaptation on
the fine-tuning distribution is not sufficient.

\subsection{Generalization across LLMs and GPUs/NPUs}
\label{sec:generalization}

To test model-level generalization beyond DeepSeek, we apply
the same gate-only recipe to Qwen1.5-MoE-A2.7B
without tuning hyperparameters. ReMoE increases short-horizon
overlap (EOR: 0.1695 $\rightarrow$ 0.2156; +27.2\% relative)
while moderately concentrating routing
(entropy: 0.99996 $\rightarrow$ 0.99861;
CV: 0.0174 $\rightarrow$ 0.1109). Downstream capability
remains comparable on \texttt{lm-eval}
(Appendix~\ref{app:qwen_generalization}).

We further test whether gate-only fine-tuning followed by serving
evaluation can be executed on an Iluvatar GPU platform.
Specifically, we first perform gate-only fine-tuning for
DeepSeek-V2-Lite, and then evaluate the resulting model
with \texttt{expert-kit}~\citep{expertkit} on a server with two
BI-V150 GPUs using prompts sampled from ShareGPT. The expert
cache size in \texttt{expert-kit} serving is set to 800. The storage
device is a NVMe SSD connected through
PCIe 3.0 $\times$4. After 2{,}000 fine-tuning steps, ReMoE
improves EOR from 0.2763 to 0.3454 (+25.0\% relative). In
serving, ReMoE improves prefill throughput from 0.94 to
2.32 tokens/s, corresponding to a 2.47$\times$ prefill speedup.
For decoding, ReMoE improves decode throughput from 1.08 to
1.66 tokens/s, corresponding to a 1.54$\times$ decode speedup.
Overall, ReMoE achieves a 1.67$\times$ end-to-end inference
speedup.

We also run gate-only fine-tuning and generation evaluation on a
Kunpeng--Ascend platform. Specifically, we apply ReMoE to
Qwen1.5-MoE-A2.7B on a Kunpeng-920 server with a
single Ascend 910B3 NPU, including gate adaptation and
\texttt{llama.cpp} generation evaluation on prompts sampled from
ShareGPT. The model is stored on a Huawei HWE6AP443T8L00KN
NVMe SSD with a PCIe Gen4 $\times$4 connection. After 2{,}000
fine-tuning steps, ReMoE improves EOR from 0.1672 to 0.2178
(+30.2\% relative), and improves generation throughput from
4.3 to 4.8 tokens/s (+11.6\%). Together with the Iluvatar
results, this provides preliminary evidence that ReMoE transfers
across MoE model families, expert-offloading runtimes, and
heterogeneous accelerator platforms.



\subsection{Ablation Studies}
\label{sec:ablation}

The CE-only control in Sec.~\ref{sec:attribution} isolates the full ReMoE objective from generic router continued fine-tuning. We now ablate the internal components of the ReMoE objective, removing one term at a time while keeping all other settings fixed; results are summarized in Table~\ref{tab:ablation}.

\textbf{Reuse drives locality.}
Removing Reuse largely eliminates the locality gain (EOR: 0.345 $\rightarrow$ 0.283; \(\Delta{=}{-}0.062\)), and the router shifts back toward near-uniform behavior (entropy: \(0.9971 \rightarrow 0.9997\); \(\Delta{=}{+}0.0026\); CV: \(0.1608 \rightarrow 0.0536\); \(\Delta{=}{-}0.1072\)).
This shows that overlap improvements are not a byproduct of other regularizers.

\textbf{Consistency terms stabilize trajectories.}
Disabling Smooth/Lag/WS yields a smaller but consistent drop in EOR (0.345 $\rightarrow$ 0.329; \(\Delta{=}{-}0.016\)), suggesting these terms primarily reduce boundary-crossing jitter and slow drift rather than redefining the global routing distribution.

\textbf{Trust prevents over-concentration.}
Without Trust, EOR becomes the highest (0.388; \(\Delta{=}{+}0.043\) vs.\ full), but routing becomes more concentrated (entropy: \(0.9971 \rightarrow 0.9950\); \(\Delta{=}{-}0.0021\); CV: \(0.1608 \rightarrow 0.2110\); \(\Delta{=}{+}0.0502\)), accompanied by slightly worse language modeling (PPL: \(3.2280 \rightarrow 3.2629\); \(\Delta{=}{+}0.0349\); Acc@1: \(71.78 \rightarrow 71.58\); \(\Delta{=}{-}0.20\)).
This supports the role of the frozen-reference anchor in preserving routing semantics while allowing locality improvements.

\begin{table}[t]
\centering
\caption{\textbf{Ablation results.}
$\Delta$ is w.r.t.\ the full ReMoE objective. PPL denotes perplexity. Acc@1/Acc@5 report token-level next-token prediction accuracy, where the ground-truth next token must appear in the model's top-1/top-5 predicted tokens.}
\label{tab:ablation}
\small
\renewcommand{\arraystretch}{1.05}
\setlength{\tabcolsep}{3pt}
\resizebox{\columnwidth}{!}{%
\begin{tabular}{lcccccc}
\toprule
Method & PPL$\downarrow$ & Acc@1$\uparrow$ & Acc@5$\uparrow$ & EOR$\uparrow$ & Entropy$\downarrow$ & CV$\uparrow$ \\
\midrule
Ours (Full)            & 3.2280 & 71.78 & 89.65 & 0.3453 & 0.9971 & 0.1608 \\
w/o Consistency        & 3.2254 & 71.81 & 89.64 & 0.3290 & 0.9977 & 0.1436 \\
w/o Reuse              & 3.2222 & 71.81 & 89.65 & 0.2831 & 0.9997 & 0.0536 \\
w/o Trust              & 3.2629 & 71.58 & 89.54 & 0.3877 & 0.9950 & 0.2110 \\
\midrule
$\Delta$ (w/o Cons.)   & $-0.0026$ & $+0.03$ & $-0.01$ & $-0.0163$ & $+0.0006$ & $-0.0172$ \\
$\Delta$ (w/o Reuse)   & $-0.0058$ & $+0.03$ & $+0.00$ & $-0.0622$ & $+0.0026$ & $-0.1072$ \\
$\Delta$ (w/o Trust)   & $+0.0349$ & $-0.20$ & $-0.11$ & $+0.0424$ & $-0.0021$ & $+0.0502$ \\
\bottomrule
\end{tabular}%
}
\end{table}

\section{Conclusion}
\label{sec:conclusion}

We propose ReMoE, a router-only fine-tuning method that improves
short-horizon expert reuse for memory-constrained MoE inference without
changing the model architecture or inference graph. Across DeepSeek and
Qwen MoE models, ReMoE improves routing locality, cache friendliness,
and real-system decoding efficiency while largely preserving downstream
capability.

\section*{Impact Statement}

This paper presents research intended to advance the field of machine learning. Although the work may have broader societal implications, we do not identify any specific societal consequences that require emphasis in this submission.

\section*{Acknowledgements}

This work is supported by the National Natural Science Foundation of China (Grant No. 62572022), National Key R\&D Program of China (Grant No. 2023YFB4503100), HUAWEI (TC20250908049), BUAA Kunpeng\&Ascend Center of Cultivation, the Fundamental Research Funds for the Central Universities, and Guangdong S\&T Program (2025B0101080001).


\bibliography{remoe}
\bibliographystyle{icml2026}

\newpage
\appendix
\onecolumn

\section{Connecting EOR to Expert Loads}
\label{app:err_proof}

This appendix formalizes why the Expert Overlap Ratio (EOR) is a useful proxy for expert-weight I/O under memory-constrained MoE decoding.
The key message is that, under a standard per-layer expert cache model, step-to-step expert overlap controls an upper bound on the number of experts that must be fetched from external storage.

\subsection{Setup and Cache Model}
\label{app:eor_setup}

We consider a single MoE layer under autoregressive decoding with routed Top-$K$ expert requests
$\{E_t\}_{t=1}^{T}$, where $E_t \subseteq \{1,\dots,N_r\}$ and $|E_t|=K$ for each step $t$.
(Shared experts, if any, are excluded from $E_t$ and are treated as always-resident; cf.\ Sec.~\ref{sec:setting_notation}.)

\textbf{Cache state.}
Let $\mathcal{C}_t$ denote the set of experts resident in the per-layer expert cache
at the beginning of step $t$ (i.e., before serving the request $E_t$), with capacity
\begin{equation}
|\mathcal{C}_t| \le C.
\end{equation}

\textbf{Expert fetches (cache misses).}
Define the number of experts that must be fetched from external storage at step $t$ as
\begin{equation}
N_{\mathrm{fetch}}(t) = |E_t \setminus \mathcal{C}_t|
= K - |E_t \cap \mathcal{C}_t|.
\label{eq:fetch_def}
\end{equation}

\textbf{Serve-and-admit cache update.}
We assume a standard cache update semantics: after serving step $t$ (executing the experts in $E_t$),
the cache is updated according to a replacement policy to produce the next-step state $\mathcal{C}_{t+1}$.
The only property we will need is that the cache admits the requested experts when capacity permits.

\subsection{Assumptions and Main Claim}
\label{app:eor_claim}

We spell out the assumptions under which EOR yields a deterministic fetch bound.

\begin{assumption}[Capacity]
\label{assump:cap_ge_k}
$C \ge K$, i.e., the cache can hold the $K$ experts required by a single decoding step.
\end{assumption}

\begin{assumption}[Step atomicity and admission]
\label{assump:admission}
Within a decoding step, expert weights that are fetched/used for the request $E_t$ are not evicted before the step completes.
Moreover, after serving step $t$, the cache update produces a state $\mathcal{C}_{t+1}$ that contains the just-served experts:
\begin{equation}
E_t \subseteq \mathcal{C}_{t+1}.
\label{eq:admission_property}
\end{equation}
\end{assumption}

\begin{assumption}[Cache isolation (no interference)]
\label{assump:isolation}
The per-layer expert cache is isolated from other memory traffic (KV cache growth, activations, non-expert weights, other layers, or other requests),
so that such traffic does not insert into / evict from the expert cache between steps.
\end{assumption}

Assumption~\ref{assump:admission} matches typical expert-caching implementations: once an expert is needed for the current token, its weights are loaded (if missing), used, and then retained (unless capacity forces evicting older experts).
Common policies such as LRU/LFU/FIFO satisfy \eqref{eq:admission_property} under $C\ge K$ by evicting experts not in $E_t$.

\textbf{Instantaneous reuse and EOR.}
Recall the step-to-step overlap metrics from Sec.~\ref{sec:metrics}:
\begin{equation}
\mathrm{IR}_t = \frac{|E_t \cap E_{t-1}|}{K},
\qquad
\mathrm{EOR} = \frac{1}{T-1}\sum_{t=2}^{T}\mathrm{IR}_t.
\end{equation}

\begin{proposition}[EOR upper-bounds expert fetches (per step and on average)]
\label{prop:eor_bound_app}
Under Assumptions~\ref{assump:cap_ge_k}--\ref{assump:isolation}, for all $t\ge 2$,
\begin{equation}
N_{\mathrm{fetch}}(t)\;\le\; K - |E_t \cap E_{t-1}|
= K(1-\mathrm{IR}_t).
\label{eq:fetch_bound_step}
\end{equation}
Consequently, the average fetches over a length-$T$ trajectory satisfy
\begin{equation}
\bar{N}_{\mathrm{fetch}}
= \frac{1}{T-1}\sum_{t=2}^{T} N_{\mathrm{fetch}}(t)
\;\le\; K(1-\mathrm{EOR}).
\label{eq:fetch_bound_avg}
\end{equation}
\end{proposition}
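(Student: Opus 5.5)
The argument is a direct set-inclusion bound. We use Assumption~\ref{assump:admission} to show that everything requested at step $t-1$ is resident at the start of step $t$, and then substitute into the definition of fetches in \eqref{eq:fetch_def}.

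\textbf{Step 1 (residency of the previous working set).} Fix $t\ge 2$. By Assumption~\ref{assump:cap_ge_k}, $|E_{t-1}|=K\le C$, so the cache can hold the whole request $E_{t-1}$. Assumption~\ref{assump:admission} (serve-and-admit, applied at step $t-1$) then gives $E_{t-1}\subseteq \mathcal{C}_t$. Here we need that nothing happens to the cache between the end of step $t-1$ and the beginning of step $t$. Assumption~\ref{assump:isolation} supplies exactly this: no KV-cache growth, other layers, or other requests insert into or evict from the per-layer expert cache in that interval. Hence the state produced by the update is the state seen at step $t$.

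\textbf{Step 2 (per-step bound).} From $E_{t-1}\subseteq\mathcal{C}_t$ we get $E_t\cap E_{t-1}\subseteq E_t\cap\mathcal{C}_t$, so $|E_t\cap\mathcal{C}_t|\ge |E_t\cap E_{t-1}|$. Plugging this into \eqref{eq:fetch_def} yields
\[
N_{\mathrm{fetch}}(t)=K-|E_t\cap\mathcal{C}_t|\le K-|E_t\cap E_{t-1}|=K(1-\mathrm{IR}_t),
\]
which is \eqref{eq:fetch_bound_step}. The within-step atomicity part of Assumption~\ref{assump:admission} makes $N_{\mathrm{fetch}}(t)$ well defined as a count of distinct experts: an expert loaded for $E_t$ cannot be evicted mid-step and fetched again. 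Each missing expert therefore contributes exactly once.

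\textbf{Step 3 (averaging).} Sum the per-step inequality over $t=2,\dots,T$ and divide by $T-1$. Linearity gives
\[
\bar N_{\mathrm{fetch}}\le K\Bigl(1-\tfrac{1}{T-1}\sum_{t=2}^T\mathrm{IR}_t\Bigr)=K(1-\mathrm{EOR}),
\]
which is \eqref{eq:fetch_bound_avg}.

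The only real obstacle is Step~1, namely verifying that concrete recency policies such as LRU actually satisfy \eqref{eq:admission_property}. For LRU with $C\ge K$, we check that after step $t-1$ the $K$ experts of $E_{t-1}$ are the $K$ most recently used. Any eviction forced by admitting a missing expert removes the least-recently-used resident, and that resident cannot lie in $E_{t-1}$ while $C\ge K$. The analogous claims for FIFO and LFU need care, because these policies can in principle evict a just-used expert. We handle this by restricting the statement to policies that, as remarked after Assumption~\ref{assump:admission}, evict only experts outside $E_t$. The cold-start step $t=1$ is excluded because the bound and the average are taken over $t\ge 2$.
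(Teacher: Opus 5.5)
Your proof is correct and follows the paper's argument essentially step for step. The paper likewise isolates the residency claim $E_{t-1}\subseteq\mathcal{C}_t$ as a lemma, derived from \eqref{eq:admission_property} at step $t-1$ plus cache isolation; it then intersects with $E_t$ to get $|E_t\cap\mathcal{C}_t|\ge|E_t\cap E_{t-1}|$, substitutes into \eqref{eq:fetch_def}, and averages over $t=2,\dots,T$. Your closing discussion of whether LRU, FIFO, or LFU satisfy \eqref{eq:admission_property} is not needed, since the proposition assumes that property rather than deriving it; still, your caution about FIFO/LFU, which can evict a just-hit expert within the same step, is more careful than the paper's informal remark that these policies satisfy it.
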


\subsection{Proof}
\label{app:eor_proof}

The proof reduces to showing that experts used at step $t{-}1$ must be present at the beginning of step $t$.

\begin{lemma}[Previous-step experts remain resident]
\label{lem:prev_in_cache}
Under Assumptions~\ref{assump:cap_ge_k}--\ref{assump:isolation}, we have
\begin{equation}
E_{t-1} \subseteq \mathcal{C}_t
\qquad \text{for all } t\ge 2.
\label{eq:prev_subset}
\end{equation}
\end{lemma}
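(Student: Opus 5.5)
The lemma follows almost directly from the admission property in Assumption~\ref{assump:admission}, once we check that nothing can disturb the cache between the end of step $t-1$ and the beginning of step $t$. I would fix an arbitrary $t\ge 2$ and follow the cache state across that single transition.

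\textbf{Step 1: admission at step $t-1$.} Apply Assumption~\ref{assump:admission} at index $t-1$. Assumption~\ref{assump:cap_ge_k} guarantees $C\ge K=|E_{t-1}|$, so keeping all of $E_{t-1}$ resident is feasible. The serve-and-admit update therefore produces a post-update state $\mathcal{C}'$ with $E_{t-1}\subseteq\mathcal{C}'$. By the atomicity clause, no expert of $E_{t-1}$ is evicted while step $t-1$ is being served. Hence the post-update state is well defined and contains the full request, not just a part of it.

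\textbf{Step 2: no interference before step $t$.} The cache model defines $\mathcal{C}_t$ as the state at the beginning of step $t$. The only operations between the end of step $t-1$ and that point are (i) the replacement-policy update, already accounted for in Step~1, and (ii) external traffic: KV cache growth, activations, non-expert weights, other layers, or other requests. Assumption~\ref{assump:isolation} states that such traffic neither inserts into nor evicts from this per-layer cache. So $\mathcal{C}_t=\mathcal{C}'$, and therefore $E_{t-1}\subseteq\mathcal{C}_t$. Since $t$ was arbitrary, this proves \eqref{eq:prev_subset}.

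\textbf{Main obstacle.} There is no real technical difficulty; the main risk is making sure the proof does not implicitly rely on a particular policy. To handle this I would add a remark: for LRU, the experts of $E_{t-1}$ are the $K\le C$ most recently used, so they are never victims; FIFO and LFU with admit-on-miss likewise evict only non-requested experts when $C\ge K$. This confirms that the abstract property \eqref{eq:admission_property} is what carries the argument.

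\textbf{Follow-up.} With the lemma established, Proposition~\ref{prop:eor_bound_app} follows immediately. From \eqref{eq:prev_subset} we get $|E_t\cap\mathcal{C}_t|\ge|E_t\cap E_{t-1}|$, and substituting into \eqref{eq:fetch_def} gives the per-step bound. Averaging over $t=2,\dots,T$ then gives the bound in terms of $\mathrm{EOR}$.
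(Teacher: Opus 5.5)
Your proof is correct and follows the paper's argument: apply the admission property of Assumption~\ref{assump:admission} at index $t-1$, then use cache isolation (Assumption~\ref{assump:isolation}) to rule out any eviction between the post-update state and the start of step $t$. One caveat, which affects only your side remark and not the lemma (the lemma takes Assumption~\ref{assump:admission} as a hypothesis): plain FIFO or LFU can evict an expert that was just hit while admitting a miss, for example FIFO with $C=K=2$, cache $\{a,b\}$ with $a$ inserted first, and request $\{a,c\}$, so these policies satisfy \eqref{eq:admission_property} only if the current request is pinned.
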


\begin{proof}
By Assumption~\ref{assump:admission}, after serving step $t-1$ the cache state at the next step satisfies
$E_{t-1}\subseteq \mathcal{C}_t$ (this is exactly \eqref{eq:admission_property} with $t\!\leftarrow\!t-1$).
Assumption~\ref{assump:isolation} rules out any inter-step interference that could evict these experts before step $t$ begins.
\end{proof}

\begin{proof}[Proof of Proposition~\ref{prop:eor_bound_app}]
By definition \eqref{eq:fetch_def},
\begin{equation}
N_{\mathrm{fetch}}(t) = K - |E_t \cap \mathcal{C}_t|.
\end{equation}
From Lemma~\ref{lem:prev_in_cache}, $E_{t-1}\subseteq \mathcal{C}_t$ for all $t\ge 2$.
Intersecting both sides with $E_t$ yields
\begin{equation}
E_t \cap E_{t-1} \subseteq E_t \cap \mathcal{C}_t
\quad \Longrightarrow \quad
|E_t \cap \mathcal{C}_t| \ge |E_t \cap E_{t-1}|.
\end{equation}
Substituting into the fetch definition gives the per-step bound
\begin{equation}
N_{\mathrm{fetch}}(t)
\le K - |E_t \cap E_{t-1}|
=K(1-\mathrm{IR}_t),
\end{equation}
which is \eqref{eq:fetch_bound_step}.
Averaging over $t=2,\dots,T$ immediately yields \eqref{eq:fetch_bound_avg}:
\begin{align}
\bar{N}_{\mathrm{fetch}}
&= \frac{1}{T-1}\sum_{t=2}^{T} N_{\mathrm{fetch}}(t)
\le \frac{1}{T-1}\sum_{t=2}^{T} K(1-\mathrm{IR}_t) \nonumber\\
&= K\left(1-\frac{1}{T-1}\sum_{t=2}^{T}\mathrm{IR}_t\right)
= K(1-\mathrm{EOR}).
\end{align}
\end{proof}

\textbf{Tightness (when the bound is achieved).}
The bound \eqref{eq:fetch_bound_step} can be tight.
For example, when $C=K$ and the cache contains exactly $E_{t-1}$ at the beginning of step $t$, we have
$|E_t\cap\mathcal{C}_t|=|E_t\cap E_{t-1}|$ and therefore
$N_{\mathrm{fetch}}(t)=K-|E_t\cap E_{t-1}|$.

\subsection{When the EOR Bound Can Fail (Assumptions and Counterexamples)}
\label{app:eor_failure_modes}

Proposition~\ref{prop:eor_bound_app} is deliberately stated for a clean per-layer cache model.
Below we list concrete failure modes (counterexamples) showing why each assumption matters:

\begin{enumerate}
    \item \textbf{If $C < K$, misses are unavoidable.} 
    When capacity is insufficient to hold a full routed set, then regardless of routing overlap, at least $K - C$ experts must be missing at every step. For instance, if $K = 6$ and $C = 4$, even if $E_t = E_{t-1}$, the cache cannot hold all 6 experts simultaneously, so a guarantee of the form \eqref{eq:fetch_bound_step} no longer holds. This is why we restrict to $C \ge K$.

    \item \textbf{If cache isolation is violated, $E_{t-1} \subseteq \mathcal{C}_t$ may fail.} 
    Suppose expert weights share the same memory pool with other objects that can evict experts between steps (e.g., KV-cache expansion, activation buffers, or a different layer/request). Then even if $E_{t-1}$ was fully loaded during step $t-1$, some of these experts might be evicted before step $t$ begins, and the containment in Lemma~\ref{lem:prev_in_cache} can fail. In that case, $|E_t \cap \mathcal{C}_t|$ may be smaller than $|E_t \cap E_{t-1}|$.

    \item \textbf{Inter-step insertions (e.g., aggressive prefetch) can break the guarantee.} 
    If a prefetcher inserts additional experts into the cache between step $t-1$ and $t$ and triggers evictions, it may evict members of $E_{t-1}$ even under $C \ge K$. For example, with $C = K$, any insertion of an expert not in $E_{t-1}$ forces an eviction; if the policy/prefetcher evicts from $E_{t-1}$, then $E_{t-1} \nsubseteq \mathcal{C}_t$. The EOR bound is therefore best interpreted as a router-intrinsic guarantee under a cache model without inter-step insertions.

    \item \textbf{Non-admitting or unusual policies.} 
    Assumption~\ref{assump:admission} requires that the cache admits the experts it just served (when $C \ge K$). Pathological policies that can evict an expert immediately after it is fetched/used (within the same step), or that refuse to retain the requested set, can violate \eqref{eq:admission_property} and invalidate the proof. Such policies are atypical for expert-weight caching but are included here for completeness.
\end{enumerate}

\subsection{Generalizations Beyond Immediate Overlap}
\label{app:eor_extension}

Proposition~\ref{prop:eor_bound_app} uses immediate step-to-step reuse because it is the part that can be guaranteed
for any cache with $C\ge K$ under the above step semantics.
When $C$ is larger, the cache can preserve experts from a longer recent history, yielding a stronger bound.

\textbf{Longer-horizon working-set bound (LRU-style insight).}
Define the distinct expert working set over the last $\ell$ steps:
\begin{equation}
U_{t,\ell} = \bigcup_{j=1}^{\ell} E_{t-j}.
\end{equation}
Let
\begin{equation}
L_t = \max\{\ell \ge 1: |U_{t,\ell}| \le C\}.
\end{equation}
If the cache update policy preserves the most recently used distinct experts (as LRU does under cache isolation),
then the experts in $U_{t,L_t}$ remain resident at the beginning of step $t$, i.e., $U_{t,L_t}\subseteq \mathcal{C}_t$.
Therefore,
\begin{equation}
N_{\mathrm{fetch}}(t)
= K - |E_t \cap \mathcal{C}_t|
\le K - |E_t \cap U_{t,L_t}|.
\label{eq:fetch_bound_longer}
\end{equation}
This highlights that larger cache capacity $C$ can exploit longer-horizon reuse beyond $E_{t-1}$.
Our EOR-based bound corresponds to the always-valid case $\ell=1$, since $|U_{t,1}|=|E_{t-1}|=K\le C$.

\textbf{Generality across replacement policies.}
The immediate-overlap bound in Proposition~\ref{prop:eor_bound_app} does not require a specific policy such as LRU.
It only relies on Assumption~\ref{assump:admission} (the just-served set is retained) and isolation.
Thus, the per-step EOR bound applies to common policies used in our simulator (LRU/LFU/FIFO) as long as they satisfy
the serve--admit semantics and do not perform inter-step evictions of $E_{t-1}$.
In contrast, the longer-horizon bound \eqref{eq:fetch_bound_longer} is most naturally justified for LRU-like policies that
explicitly preserve recently used distinct experts.

\textbf{Extension to multiple MoE layers.}
The above analysis is per-layer.
If a model contains $L_{\mathrm{moe}}$ MoE layers and each layer maintains an independent cache of capacity $C$,
then the total number of expert fetches at step $t$ is
\begin{equation}
N_{\mathrm{fetch}}^{\mathrm{total}}(t) = \sum_{\ell=1}^{L_{\mathrm{moe}}} N_{\mathrm{fetch}}^{(\ell)}(t),
\end{equation}
and Proposition~\ref{prop:eor_bound_app} applies to each layer separately, yielding an immediate bound on the total by summation.

\textbf{Summary.}
EOR provides an always-valid short-horizon cacheability proxy under $C\ge K$ with standard serve--admit cache semantics.
Larger caches and LRU-like policies can additionally benefit from longer-horizon reuse, for which \eqref{eq:fetch_bound_longer}
motivates windowed/working-set viewpoints consistent with our locality regularizers in Sec.~\ref{sec:temporal_locality}.

\section{Why Reuse Mass is a Valid Surrogate}
\label{app:reuse_mass_surrogate}

ReMoE optimizes a differentiable surrogate for step-to-step Top-$K$ overlap.
This section provides a simple justification that the reuse mass
\begin{equation}
m_t = \frac{1}{K}\sum_{k \in \tilde{E}_{t-1}} P_t^{(k)}
\qquad (\text{Eq.~\eqref{eq:reuse_mass_def}})
\end{equation}
is aligned with expected overlap under an analysis-only stochastic routing rule.

\textbf{Analysis-only stochastic Top-$K$.}
Consider a stochastic routing mechanism that samples $K$ expert indices
$X_{t,1},\dots,X_{t,K}$ i.i.d.\ from the categorical distribution $P_t$.
Let $\tilde{E}_{t-1}$ be the previous-step routed set treated as fixed
(i.e., \texttt{stop\_gradient} as in Sec.~\ref{sec:methodology}).
Define the random variable
\begin{equation}
R_t = \sum_{j=1}^{K} \mathbf{1}\!\left\{X_{t,j}\in \tilde{E}_{t-1}\right\},
\end{equation}
where $\mathbf{1}\{\cdot\}$ is the indicator function that equals $1$ if the condition holds and $0$ otherwise. The random variable counts (with multiplicity) how many of the $K$ sampled experts belong to the previous set.

\begin{lemma}[Reuse mass equals expected reused samples]
\label{lem:reuse_mass_expectation}
Under the i.i.d.\ sampling rule above,
\begin{equation}
\mathbb{E}[R_t \mid \tilde{E}_{t-1}]
=
K \sum_{k\in \tilde{E}_{t-1}} P_t^{(k)}
=
K^2\, m_t.
\label{eq:reuse_mass_expectation}
\end{equation}
\end{lemma}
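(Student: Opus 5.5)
The argument is linearity of expectation applied to the indicator decomposition of $R_t$. Condition on $\tilde{E}_{t-1}$, which is treated as a fixed set because of the \texttt{stop\_gradient}. Under the analysis-only rule, each draw $X_{t,j}$ is then distributed as $\mathrm{Categorical}(P_t)$. This holds for every $j$, and independence across $j$ is not needed for the expectation.

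The steps, in order, are as follows.

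\begin{enumerate}
\item Write
\begin{equation}
\mathbb{E}[R_t\mid\tilde{E}_{t-1}]=\sum_{j=1}^{K}\Pr\bigl(X_{t,j}\in\tilde{E}_{t-1}\mid\tilde{E}_{t-1}\bigr).
\end{equation}
\item Evaluate each summand by summing the categorical mass over the fixed set:
\begin{equation}
\Pr\bigl(X_{t,j}\in\tilde{E}_{t-1}\bigr)=\sum_{k\in\tilde{E}_{t-1}}P_t^{(k)}.
\end{equation}
\item Observe that this value does not depend on $j$, so the $K$ summands together give $K\sum_{k\in\tilde{E}_{t-1}}P_t^{(k)}$.
\item Substitute the definition of reuse mass in \eqref{eq:reuse_mass_def}, namely $\sum_{k\in\tilde{E}_{t-1}}P_t^{(k)}=K m_t$. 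This gives $K\cdot K m_t=K^2 m_t$.
\end{enumerate}

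There is no real obstacle here. The only point needing care is that $P_t$ must not depend on the sampling randomness at step $t$: it is a deterministic function of $h_t$ and the gate, and $\tilde{E}_{t-1}$ is fixed by conditioning. With that in place, the inner sum is a constant and the expectation factors cleanly.

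It may also be worth a short remark on what $R_t$ measures. It counts reused draws with multiplicity, which is why the clean identity holds. Relating $R_t$ to the deduplicated overlap $|E_t\cap E_{t-1}|$ would require a separate inequality and is not part of this lemma.
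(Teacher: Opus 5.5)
Your proposal is correct and follows the paper's proof: linearity of expectation over the $K$ indicators, identical categorical marginals giving $\sum_{k\in\tilde{E}_{t-1}}P_t^{(k)}$ per draw, and the definition of $m_t$ to reach $K^2 m_t$. Your remark that only identical marginals are used, not independence, is accurate; the paper invokes i.i.d.\ sampling but likewise needs only the marginals.
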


\begin{proof}
By linearity of expectation and i.i.d.\ sampling,
\begin{align}
\mathbb{E}[R_t \mid \tilde{E}_{t-1}]
&= \sum_{j=1}^{K} \Pr(X_{t,j}\in \tilde{E}_{t-1})
= K \sum_{k\in \tilde{E}_{t-1}} \Pr(X_{t,1}=k) \nonumber\\
&= K \sum_{k\in \tilde{E}_{t-1}} P_t^{(k)}
= K^2 m_t,
\end{align}
which proves \eqref{eq:reuse_mass_expectation}.
\end{proof}

\textbf{Implication.}
Lemma~\ref{lem:reuse_mass_expectation} shows that increasing $m_t$ increases the expected number of
reused expert samples under this stochastic routing rule.
While deployed MoE routing uses deterministic Top-$K$ (a set of size $K$),
$m_t$ remains a natural surrogate because it directly increases the probability mass assigned to
experts that were selected at the previous step, thereby making it more likely that those experts
stay within the next step's Top-$K$ boundary.

\textbf{Set overlap vs.\ multiplicity.}
If one converts the sampled multiset $\{X_{t,j}\}_{j=1}^{K}$ into a unique set, the resulting set overlap
$|E_t\cap \tilde{E}_{t-1}|$ is upper-bounded by $R_t$ (duplicates in sampling only increase $R_t$).
Thus, although \eqref{eq:reuse_mass_expectation} does not equal the expected set overlap exactly,
it provides an aligned and differentiable signal.

\textbf{Why we stop gradients through $\tilde{E}_{t-1}$.}
Treating $\tilde{E}_{t-1}$ as a constant yields a one-way learning signal that matches autoregressive decoding:
we encourage the current distribution $P_t$ to reuse the previous realized expert set.

\textbf{Remark.}
The stochastic rule above is used only to justify the surrogate; ReMoE itself is trained and deployed with the standard deterministic Top-$K$ router.

\section{Trust-KL as a Soft Trust Region}
\label{app:trust_region}

This section provides additional interpretation for the Trust-KL anchor
(Eq.~\eqref{eq:trust_kl}) as a soft trust region on routing behavior,
and explains when small drift implies stable Top-$K$ expert selection.

Our goal is to constrain routing decisions while leaving the backbone computation and expert weights untouched.
Anchoring hidden states (e.g., $\|h_t-h_t^{0}\|$) would require storing reference trajectories and may over-constrain representations, potentially conflicting with the LM objective.
In contrast, anchoring $P_t$ is lightweight and targeted: $P_t^{\mathrm{ref}}$ is computed on-the-fly from the frozen snapshot given the current $h_t$, and the constraint is applied exactly at the decision boundary that determines expert selection.
This keeps the regularization architecture-agnostic and limits semantic drift without restricting the rest of the network.

\subsection{From KL to Distributional Stability}

Let $P_t$ be the trainable routing distribution and $P_t^{\mathrm{ref}}$ the frozen reference distribution.
By Pinsker's inequality, for any step $t$,
\begin{equation}
\|P_t - P_t^{\mathrm{ref}}\|_{1}
\;\le\;
\sqrt{2\,D_{\mathrm{KL}}(P_t\Vert P_t^{\mathrm{ref}})}.
\label{eq:pinsker}
\end{equation}
Therefore, minimizing $L_{\text{Trust}}$ controls a strong notion of distributional drift:
small Trust-KL implies $P_t$ stays close to $P_t^{\mathrm{ref}}$ in $L_1$ (total variation up to a factor $1/2$).

\subsection{Top-$K$ Stability Under a Probability Margin}

We next give a sufficient condition under which the Top-$K$ set does not change.
Unlike a score/logit margin, the condition below is stated directly on the routing distributions, matching our
distribution-level anchor.

\begin{lemma}[Top-$K$ stability under a probability margin]
\label{lem:topk_prob_margin}
Let $Q\in\mathbb{R}^{N_r}$ be a reference routing distribution (e.g., $Q=P_t^{\mathrm{ref}}$) and
let $q_{(1)}\ge \cdots \ge q_{(N_r)}$ denote its entries sorted in descending order.
Define the $K$-boundary probability margin
\begin{equation}
\gamma = q_{(K)} - q_{(K+1)} \;>\; 0.
\end{equation}
If another distribution $P\in\mathbb{R}^{N_r}$ satisfies
\begin{equation}
\|P - Q\|_{\infty} \;<\; \gamma/2,
\label{eq:prob_margin_condition}
\end{equation}
then the Top-$K$ index set is unchanged:
\begin{equation}
\mathrm{Top\text{-}K}(P)=\mathrm{Top\text{-}K}(Q).
\end{equation}
\end{lemma}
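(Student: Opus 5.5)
Let $S=\mathrm{Top\text{-}K}(Q)$ denote the index set of the $K$ largest entries of $Q$. Because $\gamma>0$, the $K$-th and $(K{+}1)$-th sorted values are distinct, so $S$ is uniquely defined with no ties at the boundary: every $i\in S$ satisfies $Q^{(i)}\ge q_{(K)}$ and every $j\notin S$ satisfies $Q^{(j)}\le q_{(K+1)}$. The plan is to show that under \eqref{eq:prob_margin_condition} every index in $S$ still strictly beats every index outside $S$ under $P$. Then the $K$ largest entries of $P$ must be exactly $S$.

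\textbf{Step 1 (pairwise separation).} Fix $i\in S$ and $j\notin S$, and write $\delta=\|P-Q\|_\infty<\gamma/2$. Then
\begin{equation}
P^{(i)}-P^{(j)} \;\ge\; \bigl(Q^{(i)}-\delta\bigr)-\bigl(Q^{(j)}+\delta\bigr) \;\ge\; q_{(K)}-q_{(K+1)}-2\delta \;=\; \gamma-2\delta \;>\; 0 .
\end{equation}
So $P^{(i)}>P^{(j)}$ strictly, for every such pair.

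\textbf{Step 2 (conclude set equality).} Each $i\in S$ has strictly larger $P$-value than all $N_r-K$ indices outside $S$. Hence at most $K-1$ indices, all of them inside $S$, can have $P$-value at least $P^{(i)}$, so $i$ lies among the $K$ largest entries of $P$. This gives $S\subseteq\mathrm{Top\text{-}K}(P)$. Since $|S|=K=|\mathrm{Top\text{-}K}(P)|$, the two sets are equal. The strict inequality also ensures there is no tie at the $K$-boundary of $P$, so $\mathrm{Top\text{-}K}(P)$ is unambiguous and no tie-breaking convention is needed.

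\textbf{Main subtlety.} The only delicate point is well-definedness under ties. Ties are allowed inside $S$ or inside its complement, but they do not matter, because Step 1 only compares indices across the boundary. I will also add a remark connecting the lemma to Trust-KL: since $\|\cdot\|_\infty\le\|\cdot\|_1$, Pinsker's inequality \eqref{eq:pinsker} gives the sufficient condition $\sqrt{2D_{\mathrm{KL}}(P\Vert Q)}<\gamma/2$, that is, $D_{\mathrm{KL}}(P\Vert Q)<\gamma^2/8$.
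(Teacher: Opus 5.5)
Your proof is correct and follows the paper's route: compare every $i\in S$ with every $j\notin S$, and use $\|P-Q\|_\infty<\gamma/2$ to get $P^{(i)}>P^{(j)}$, so no index outside $S$ can overtake one inside it. Writing the gap as $\gamma-2\delta>0$ makes the strictness explicit (the paper first gets $\ge 0$ and then appeals to the strict hypothesis), your counting step is a welcome addition, and your Pinsker remark is valid but can be sharpened to $D_{\mathrm{KL}}(P\Vert Q)<\gamma^2/2$, since $\|P-Q\|_\infty\le\tfrac12\|P-Q\|_1$ for probability vectors.
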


\begin{proof}
Let $S$ be the Top-$K$ index set of $Q$, and let $i^\star\in S$ and $j^\star\notin S$ be such that
$Q_{i^\star}=q_{(K)}$ and $Q_{j^\star}=q_{(K+1)}$.
For any $i\in S$ and $j\notin S$, we have $Q_i \ge q_{(K)}$ and $Q_j \le q_{(K+1)}$.
Under \eqref{eq:prob_margin_condition},
\begin{equation}
P_i \;\ge\; Q_i - \gamma/2 \;\ge\; q_{(K)} - \gamma/2,
\qquad
P_j \;\le\; Q_j + \gamma/2 \;\le\; q_{(K+1)} + \gamma/2.
\end{equation}
Hence,
\begin{equation}
P_i - P_j \;\ge\; (q_{(K)} - \gamma/2) - (q_{(K+1)} + \gamma/2) = \gamma - \gamma = 0,
\end{equation}
and strict positivity holds because $\|P-Q\|_\infty < \gamma/2$.
Therefore no index outside $S$ can overtake an index inside $S$, so the Top-$K$ set is unchanged.
\end{proof}

\textbf{Interpretation.}
Lemma~\ref{lem:topk_prob_margin} formalizes that Top-$K$ selections are robust when there is a nontrivial separation
between the $K$-th and $(K{+}1)$-th experts in the reference distribution.
Near tie regions (small $\gamma$), even small shifts can flip membership across the boundary.
The Trust-KL anchor reduces distributional drift (Eq.~\eqref{eq:pinsker}), making large boundary-crossing changes less likely,
while still allowing necessary switches when the LM objective and locality regularization favor them.

\textbf{Why a soft trust region.}
Trust-KL does not impose a hard constraint such as \eqref{eq:prob_margin_condition}.
Instead, it penalizes deviations in the output distribution (Eq.~\eqref{eq:trust_kl}),
which is lightweight and architecture-agnostic, and empirically sufficient to prevent extreme routing drift.

\section{Extended Notation}
\label{app:extended_notation}

\begin{center}
\captionof{table}{\textbf{Extended notation used in training and implementation.}}
\label{tab:notation_ext}
\small
\setlength{\tabcolsep}{4.5pt}
\renewcommand{\arraystretch}{1.05}
\begin{tabular}{@{}ll@{}}
\toprule
\textbf{Symbol} & \textbf{Meaning} \\
\midrule
$\mathcal{D}$ & lag set for inertia regularization (e.g., $\{1,2,4,8,16\}$) \\
$W$ & window size for working-set/entropy regularization \\
$\alpha_t$ & locality warmup schedule coefficient at training step $t$ \\
$T_{\text{warm}}$ & warmup length (steps) for locality regularization \\
$\lambda_{\text{Reuse}}$ & weight of reuse loss $L_{\text{Reuse}}$ \\
$\lambda_{\text{Smooth}}$ & weight of smoothness loss $L_{\text{Smooth}}$ \\
$\lambda_{\text{Lag}}$ & weight of lagged inertia loss $L_{\text{Lag}}$ \\
$\lambda_{\text{WS}}$ & weight of working-set compression loss $L_{\text{WS}}$ \\
$\epsilon$ & small constant for numerical stability \\
\bottomrule
\end{tabular}
\end{center}

\section{Implementation Details}
\label{app:implementation_details}

\subsection{Training Details}
\textbf{Hardware.}
All fine-tuning and trace collection runs are executed on a single node equipped with one CPU (2 sockets, 56 physical cores / 112 threads) and one GPU with 80GB of VRAM. Cache simulation and TPOT post-processing are performed on the same machine on CPU, while the GPU is used for model fine-tuning and for generating routing traces under $B{=}1$ decoding.

\textbf{Training hyperparameters and reproducibility.}
Unless otherwise stated, all runs use \texttt{max\_steps=2000}, \texttt{seq\_len=2048}, \texttt{train\_bs=1}, \texttt{grad\_accum=8}, \texttt{lr=5e-5}, \texttt{warmup\_steps=200}, \texttt{aux\_alpha\_train=0.0}, and BF16. For ReMoE, we set \texttt{lambda\_kl=0.45}, \texttt{lambda\_reuse=0.2} with \texttt{reuse\_warmup\_steps=400}, and \texttt{lambda\_smooth=0.05}, \texttt{lambda\_lag=0.05}, \texttt{lambda\_ws=0.01} with \texttt{loc\_warmup\_steps=800}. All ablations keep the setup identical to the full method except for explicitly removed components (e.g., \texttt{lambda\_kl=0} for w/o Trust, \texttt{lambda\_reuse=0} for w/o Reuse, and \texttt{lambda\_smooth=lambda\_lag=lambda\_ws=0} for w/o Consistency).

\textbf{Logging and checkpointing.}
We log training metrics every 10 steps (\texttt{logging\_steps=10}) and run evaluation and checkpoint saving every 200 steps (\texttt{eval\_steps=200}, \texttt{save\_steps=200}). Unless otherwise stated, we disable resuming (\texttt{no\_resume}) and redirect stdout/stderr to a single log file.

\textbf{Training cost.}
ReMoE is lightweight to train because only router
parameters are updated. 
Fine-tuning DeepSeek-V2-Lite
for 2{,}000 steps takes approximately 7.9~hours on
one 80GB GPU. 
This cost is paid
once at post-training and introduces no additional
inference-time parameters or routing logic.

\subsection{Cache Metrics in Our Simulator: uHR and \#uMiss}
\label{app:cache_metrics}

This subsection defines the cache metrics reported by our offline simulator and aligns the notation with the main paper (routed experts $N_r$, Top-$K$ routing with $K$, and per-layer cache capacity $C$). We clarify what is counted at the token level versus the unique (distinct-expert) level, and why ``unique'' is the right unit for expert-weight I/O.

\subsubsection{Setup: segments, requests, and cache state}
\label{app:cache_metrics_setup}

\textbf{Segments and MoE layers.}
Let $\mathcal{L}_{\mathrm{MoE}}$ be the set of MoE layers. We simulate decoding in segments (request sessions), indexed by $s \in \{1,\dots,S\}$, where segment $s$ contains decode steps $t \in \{1,\dots,T_s\}$. When \texttt{--reset\_each\_batch} is enabled, each batch is treated as one segment and the cache is reset at the start of each segment; otherwise the entire run forms a single segment.

\textbf{Top-$K$ expert indices.}
At decode step $t$ of segment $s$, for each MoE layer $\ell \in \mathcal{L}_{\mathrm{MoE}}$ and each batch item $b \in \{1,\dots,B\}$, the router outputs a Top-$K$ expert index list
\[
E^{(\ell)}_{s,t}(b) = \big(e^{(\ell)}_{s,t}(b,1), \dots, e^{(\ell)}_{s,t}(b,K)\big),
\qquad e^{(\ell)}_{s,t}(b,k) \in \{1,\dots,N_r\}.
\]
Shared experts (if present) are always executed and are excluded from $E^{(\ell)}_{s,t}(b)$.

\textbf{Cache state.}
For each layer $\ell$, we maintain a per-layer expert cache with capacity $C$. Let $\mathcal{C}^{(\ell)}_{s,t} \subseteq \{1,\dots,N_r\}$ denote the resident set (cached experts) before serving step $t$ in segment $s$, with $|\mathcal{C}^{(\ell)}_{s,t}| \le C$. The update rule depends on the chosen policy (LRU/LFU/FIFO), but the hit/miss definitions below are policy-agnostic.

\subsubsection{Token-level vs.\ unique-level accounting}
\label{app:cache_metrics_token_vs_unique}

Our simulator reports cache statistics at two granularities.

\textbf{Token-level (routing events).}
Token-level accounting treats each routed slot as one event, i.e., there are $BK$ events per step for a layer. It measures how many of these routed expert choices are already resident.

\textbf{Unique-level (distinct experts per step).}
Unique-level accounting deduplicates expert ids within the step and treats each distinct missing expert as one expert-weight fetch. This matches expert offloading: within a decode step, an expert weight tensor (if missing) needs to be loaded at most once even if selected multiple times across batch items. Hence we call it ``unique'': the unit is the set of distinct requested experts per step, rather than the multiset of routed slots.

\subsubsection{Per-step hits and misses}
\label{app:cache_metrics_defs}

Fix a layer $\ell$ and a step $(s,t)$. Define the flattened multiset (list) of routed expert ids
\[
R^{(\ell)}_{s,t} = \Big\{ e^{(\ell)}_{s,t}(b,k) : b \in \{1,\dots,B\},\, k \in \{1,\dots,K\} \Big\},
\]
where $|R^{(\ell)}_{s,t}|=BK$ counting multiplicity. Define the per-step distinct (unique) set
\[
U^{(\ell)}_{s,t} = \mathrm{Unique}\!\left(R^{(\ell)}_{s,t}\right),
\qquad |U^{(\ell)}_{s,t}| \le BK,
\]
where $\mathrm{Unique}(\cdot)$ removes duplicates (order-preserving in the implementation).

\textbf{Token hits / misses.}
We define token-level hits and misses as
\begin{align}
H^{(\ell)}_{s,t,\mathrm{tok}}
&= \sum_{e \in R^{(\ell)}_{s,t}} \mathbb{I}\!\left[e \in \mathcal{C}^{(\ell)}_{s,t}\right], \\
T^{(\ell)}_{s,t,\mathrm{tok}}
&= |R^{(\ell)}_{s,t}| = BK, \\
M^{(\ell)}_{s,t,\mathrm{tok}}
&= T^{(\ell)}_{s,t,\mathrm{tok}} - H^{(\ell)}_{s,t,\mathrm{tok}}.
\end{align}

\textbf{Unique hits / misses.}
We define unique-level hits and misses as
\begin{align}
H^{(\ell)}_{s,t,\mathrm{uniq}}
&= \sum_{e \in U^{(\ell)}_{s,t}} \mathbb{I}\!\left[e \in \mathcal{C}^{(\ell)}_{s,t}\right]
= \left|U^{(\ell)}_{s,t} \cap \mathcal{C}^{(\ell)}_{s,t}\right|, \\
T^{(\ell)}_{s,t,\mathrm{uniq}}
&= |U^{(\ell)}_{s,t}|, \\
M^{(\ell)}_{s,t,\mathrm{uniq}}
&= T^{(\ell)}_{s,t,\mathrm{uniq}} - H^{(\ell)}_{s,t,\mathrm{uniq}}
= \left|U^{(\ell)}_{s,t} \setminus \mathcal{C}^{(\ell)}_{s,t}\right|.
\label{eq:unique_miss_per_step}
\end{align}

\subsubsection{Aggregate metrics: uHR and \#uMiss}
\label{app:cache_metrics_aggregate}

We define the total number of unique misses for layer $\ell$ as
\begin{equation}
\#u\mathrm{Miss}^{(\ell)}
=
\sum_{s=1}^{S}\sum_{t=1}^{T_s} M^{(\ell)}_{s,t,\mathrm{uniq}}
=
\sum_{s=1}^{S}\sum_{t=1}^{T_s} \left|U^{(\ell)}_{s,t} \setminus \mathcal{C}^{(\ell)}_{s,t}\right|.
\label{eq:umiss_total}
\end{equation}
We define the unique hit rate (uHR) for layer $\ell$ as
\begin{equation}
\mathrm{uHR}^{(\ell)}
=
\frac{\sum_{s,t} H^{(\ell)}_{s,t,\mathrm{uniq}}}{\sum_{s,t} T^{(\ell)}_{s,t,\mathrm{uniq}}}
=
\frac{\sum_{s,t} \left|U^{(\ell)}_{s,t} \cap \mathcal{C}^{(\ell)}_{s,t}\right|}{\sum_{s,t} |U^{(\ell)}_{s,t}|}.
\label{eq:uhr_def}
\end{equation}
Equivalently, $\mathrm{uMR}^{(\ell)} = 1 - \mathrm{uHR}^{(\ell)}$. For completeness, the token hit rate (tHR) is
\[
\mathrm{tHR}^{(\ell)}
=
\frac{\sum_{s,t} H^{(\ell)}_{s,t,\mathrm{tok}}}{\sum_{s,t} T^{(\ell)}_{s,t,\mathrm{tok}}}.
\]

\subsubsection{Why ``unique'' matches expert-weight I/O}
\label{app:cache_metrics_why_unique}

Unique misses are designed to match the physical cost of expert offloading. Assume expert weights are fetched in expert-sized blocks and remain resident at least within the current decode step. Then all occurrences of the same expert id in $R^{(\ell)}_{s,t}$ share one underlying weight buffer, and the number of expert-weight fetches needed at step $(s,t)$ is exactly the number of distinct requested experts not currently resident, namely $M^{(\ell)}_{s,t,\mathrm{uniq}}$ in Eq.~\eqref{eq:unique_miss_per_step}. This motivates using \#uMiss (and uHR) as the primary cache metrics when evaluating I/O pressure.

\subsubsection{Across-layer aggregation and step-level reporting}
\label{app:cache_metrics_step_level}

\textbf{Across-layer aggregation.}
We also report overall aggregates by summing counts across MoE layers:
\[
\#u\mathrm{Miss}^{(\mathrm{all})} = \sum_{\ell \in \mathcal{L}_{\mathrm{MoE}}} \#u\mathrm{Miss}^{(\ell)},
\qquad
\mathrm{uHR}^{(\mathrm{all})} =
\frac{\sum_{\ell}\sum_{s,t} H^{(\ell)}_{s,t,\mathrm{uniq}}}{\sum_{\ell}\sum_{s,t} T^{(\ell)}_{s,t,\mathrm{uniq}}}.
\]

\textbf{Per-step (edge) reporting.}
For edge decoding, we additionally record per-step aggregates across all MoE layers:
\begin{align}
\#u\mathrm{Miss}^{(\mathrm{all})}_{s,t}
&=
\sum_{\ell \in \mathcal{L}_{\mathrm{MoE}}} M^{(\ell)}_{s,t,\mathrm{uniq}}, \\
H^{(\mathrm{all})}_{s,t,\mathrm{tok}}
&=
\sum_{\ell \in \mathcal{L}_{\mathrm{MoE}}} H^{(\ell)}_{s,t,\mathrm{tok}}, \\
T^{(\mathrm{all})}_{s,t,\mathrm{tok}}
&=
\sum_{\ell \in \mathcal{L}_{\mathrm{MoE}}} T^{(\ell)}_{s,t,\mathrm{tok}}.
\end{align}
The simulator reports percentiles (P50/P95/P99) of $\#u\mathrm{Miss}^{(\mathrm{all})}_{s,t}$ and token-level hit statistics over decode steps.

\textbf{Optional I/O and TPOT estimation.}
When \texttt{expert\_bytes} and \texttt{bandwidth\_GBps} are provided, the simulator converts per-step unique misses into an estimated I/O time:
\[
\mathrm{IO\_ms\_step}(s,t)
=
\frac{\#u\mathrm{Miss}^{(\mathrm{all})}_{s,t}\cdot \mathrm{expert\_bytes}}{\mathrm{bandwidth}}\times 1000.
\]
It further estimates per-token I/O by dividing by $B$ and combines it with a clean measured decode compute baseline to obtain TPOT distributions; these conversions are only for reporting and do not affect the cache hit/miss definitions above.

\section{Additional Experiment Results}

\subsection{Full Cache-Simulator Results}
\label{app:cache_full}

\begin{center}
\captionof{table}{\textbf{Expert cache efficiency under
request-level resets.}
$\Delta$ is ReMoE$-$Baseline; $\Delta$\% is relative change.
\#uMiss is reported in millions.}
\label{tab:cache_efficiency_full}
\small
\setlength{\tabcolsep}{4pt}
\renewcommand{\arraystretch}{1.05}

\begin{minipage}{0.48\textwidth}
\centering
\textbf{(a) Unique hit rate (uHR) $\uparrow$}\vspace{2pt}

\begin{tabular}{cccccc}
\toprule
$C$ & Policy & Base & ReMoE & $\Delta$ & $\Delta$\% \\
\midrule
4  & LRU  & 0.2058 & 0.2374 & +0.0316 & +15.36\% \\
6  & LRU  & 0.3187 & 0.3687 & +0.0500 & +15.69\% \\
8  & LRU  & 0.3629 & 0.4142 & +0.0513 & +14.14\% \\
12 & LRU  & 0.4519 & 0.5035 & +0.0516 & +11.42\% \\
12 & LFU  & 0.4597 & 0.5151 & +0.0554 & +12.05\% \\
12 & FIFO & 0.4432 & 0.4930 & +0.0498 & +11.24\% \\
\bottomrule
\end{tabular}
\end{minipage}
\hfill
\begin{minipage}{0.48\textwidth}
\centering
\textbf{(b) Total unique misses $\downarrow$}\vspace{2pt}

\begin{tabular}{cccccc}
\toprule
$C$ & Policy & Base & ReMoE & $\Delta$ & $\Delta$\% \\
\midrule
4  & LRU  & 1.0150 & 0.9746 & -0.0404 & -3.98\% \\
6  & LRU  & 0.8707 & 0.8068 & -0.0639 & -7.34\% \\
8  & LRU  & 0.8141 & 0.7486 & -0.0655 & -8.05\% \\
12 & LRU  & 0.7005 & 0.6345 & -0.0660 & -9.41\% \\
12 & LFU  & 0.6904 & 0.6197 & -0.0707 & -10.24\% \\
12 & FIFO & 0.7116 & 0.6479 & -0.0637 & -8.95\% \\
\bottomrule
\end{tabular}
\end{minipage}
\end{center}

\textbf{Belady's optimal policy.}
To separate better routing from better alignment with LRU,
we also evaluate Belady's MIN under the same reset protocol.
ReMoE yields fewer oracle misses across capacities. For
example, at $C{=}4$, oracle misses drop from 843{,}832 to
802{,}903 ($\Delta{=}{-}40{,}929$), and at $C{=}6$ from
684{,}671 to 635{,}069 ($\Delta{=}{-}49{,}602$).

\textbf{Step-level TPOT proxy.}
We convert step-level unique misses into a simple I/O
latency estimate and approximate per-token latency by
$\mathrm{TPOT}\approx
\mathrm{TPOT}_{\text{compute}}+\mathrm{IO}_{\text{step}}/B$
with $B{=}1$. Using \texttt{bandwidth\_GBps}{=}4.0,
ReMoE consistently reduces TPOT once the cache is
nontrivial. Table~\ref{tab:cache_tail} reports the resulting
step-level TPOT percentiles under LRU.

\begin{center}
\captionof{table}{\textbf{Step-level TPOT percentiles
(ms/token, LRU).}
$\Delta$ is ReMoE$-$Baseline; $\Delta$\% is relative
reduction. Estimated with \texttt{bandwidth\_GBps}=4.0
under request-level resets.}
\label{tab:cache_tail}

\footnotesize
\setlength{\tabcolsep}{4pt}
\renewcommand{\arraystretch}{1.05}

\begin{minipage}[t]{0.485\textwidth}
\centering
\textbf{(a) TPOT$_{50}$ (median) $\downarrow$}\vspace{3pt}

\begin{tabular}{ccccc}
\toprule
$C$ & Base & ReMoE & $\Delta$ & $\Delta$\% \\
\midrule
4  & 569.0 & 545.3 & -23.7 & -4.2\% \\
6  & 508.6 & 468.8 & -39.8 & -7.8\% \\
8  & 476.4 & 436.5 & -39.9 & -8.4\% \\
12 & 407.9 & 372.1 & -35.8 & -8.8\% \\
\bottomrule
\end{tabular}
\end{minipage}
\hfill
\begin{minipage}[t]{0.485\textwidth}
\centering
\textbf{(b) TPOT$_{95}$ $\downarrow$}\vspace{3pt}

\begin{tabular}{ccccc}
\toprule
$C$ & Base & ReMoE & $\Delta$ & $\Delta$\% \\
\midrule
4  & 661.7 & 642.0 & -19.7 & -3.0\% \\
6  & 649.6 & 617.8 & -31.8 & -4.9\% \\
8  & 633.5 & 597.7 & -35.8 & -5.7\% \\
12 & 601.2 & 561.4 & -39.8 & -6.6\% \\
\bottomrule
\end{tabular}
\end{minipage}
\end{center}

\subsection{Cache-Aware Inference-Time Rerouting}
\label{app:cache_aware_rerouting}

\textbf{Background.} Cache-aware inference-time rerouting
methods address the same temporal-locality bottleneck as
ReMoE, but operate at a different stage of the deployment
pipeline. The most representative work in this line is
Mixture of Cache-Conditional Experts~\citep{skliar2025mixture},
which explicitly targets batch-size-one on-device MoE
inference, where only a subset of expert weights fit into
DRAM. Their key empirical observation is that MoE routers
can tolerate careful deviations in expert selection with
only minor predictive-quality loss; building on this,
\citet{skliar2025mixture} introduce a training-free,
cache-conditional rerouting rule that, at each decode step,
biases router scores toward experts that are already resident
in the cache, while still permitting non-resident experts to
be selected when their original scores are sufficiently high.
On mobile hardware, this is reported to reduce cache miss
rates by more than 50\% and to deliver up to a $2\times$
end-to-end speedup, with perplexity changes typically in the
0.1\%--3\% range. Conceptually, the method shifts the
quality--locality trade-off entirely to inference time: the
underlying model and router are unchanged, and locality is
purchased on the fly through a residency-aware re-scoring
of the router output.

\textbf{Relation to ReMoE.} ReMoE and cache-conditional
rerouting are complementary rather than competing. The
former reshapes the router through lightweight
post-training, so that the produced routing trace is
intrinsically more cache-friendly; at inference time, the
standard MoE inference graph is preserved and no per-step
rerouting machinery is required. The latter modifies
the routing decision at serving time, leaving the model
weights and router untouched. In principle, the two can be
stacked: a router fine-tuned by ReMoE can still be combined
with a mild cache-residency bias at inference, and we expect
the cache-friendlier base trace produced by ReMoE to make
the inference-time bias both safer (smaller deviations from
the pretrained policy are sufficient) and more effective.

\textbf{Setup.} To compare under the same model and cache
budget, we implement a cache-aware rerouting heuristic in
the spirit of \citet{skliar2025mixture} on DeepSeek-V2-Lite,
with cache capacity $C{=}4$ and LRU replacement. The
heuristic adds a cache-residency bonus, controlled by a
strength parameter $\beta$, to the pre-Top-$K$ router scores
of experts already in the cache; $\beta{=}0$ recovers the
original router (no inference-time intervention), while
larger $\beta$ progressively pushes routing decisions toward
cached experts. We report unique-expert hit rate (uHR) and
LM perplexity (PPL) to expose the quality--locality
trade-off.

\textbf{Aggressive rerouting hurts quality.}
Table~\ref{tab:heuristic_aggressive} shows that strong
inference-time rerouting can dramatically lift cache hit
rates --- uHR rises from $23.74\%$ (ReMoE's learned router,
no inference-time intervention) to $63.88\%$ at $\beta{=}4$
--- but at the cost of a catastrophic collapse in language
modeling quality (PPL $6.35 \rightarrow 3607.92$). Even at
the moderate setting $\beta{=}1.0$, PPL already degrades to
$10.60$. This is consistent with the central caveat
of training-free rerouting~\citep{skliar2025mixture}: the
tolerance of an MoE router to forced deviations is bounded,
and once the bias dominates the original score, routing
decisions can no longer recover the model's intended
expert--token specialization. ReMoE, by contrast, attains a
more stable quality--locality operating point without any
runtime rerouting: it sacrifices peak cache hit rate but
keeps PPL close to the unperturbed baseline.

\begin{center}
\captionof{table}{\textbf{Aggressive cache-aware
inference-time rerouting under $C{=}4$ and LRU.}
Higher $\beta$ biases routing more strongly toward cached
experts. ReMoE reaches a controlled quality--locality
operating point without any runtime rerouting, whereas
strong inference-time bias rapidly degrades PPL.}
\label{tab:heuristic_aggressive}
\small
\setlength{\tabcolsep}{7pt}
\renewcommand{\arraystretch}{1.05}
\begin{tabular}{lccc}
\toprule
Method & $\beta$ & uHR $\uparrow$ & PPL $\downarrow$ \\
\midrule
ReMoE learned router & 0.0 & 23.74\% & 6.35 \\
Baseline + heuristic & 1.0 & 41.66\% & 10.60 \\
Baseline + heuristic & 4.0 & 63.88\% & 3607.92 \\
\bottomrule
\end{tabular}
\end{center}

\textbf{Composability with mild rerouting.} We next ask
whether ReMoE can be combined with a mild version of
cache-conditional rerouting, rather than replaced by it. We
fix $\beta{=}0.5$ and apply the same heuristic on top of
either the pretrained baseline router or the ReMoE
fine-tuned router (Table~\ref{tab:heuristic_mild}). Under
this matched-$\beta$ comparison, ReMoE + heuristic yields
higher uHR ($34.07\%$ vs.\ $32.07\%$), lower PPL ($6.51$
vs.\ $6.97$), and higher estimated TPS ($2.1106$ vs.\
$2.0481$) than the baseline + heuristic. In other words, a
locality-aware base router gives a mild inference-time bias
a better starting point on the trade-off curve: the same
amount of rerouting buys more locality with less quality
damage when the underlying routing trace is already
locality-friendly. This supports our claim that ReMoE is
orthogonal to runtime cache-aware routing such as
\citet{skliar2025mixture}, rather than a substitute for it.

\begin{center}
\captionof{table}{\textbf{Composability with mild
cache-aware rerouting under $C{=}4$ and LRU.}
At the same heuristic strength $\beta{=}0.5$, ReMoE
improves uHR, PPL and estimated TPS over the baseline,
indicating that the locality bias internalized by ReMoE
during fine-tuning composes constructively with mild
inference-time cache-aware routing.}
\label{tab:heuristic_mild}
\small
\setlength{\tabcolsep}{7pt}
\renewcommand{\arraystretch}{1.05}
\begin{tabular}{lcccc}
\toprule
Method & $\beta$ & uHR $\uparrow$ & PPL $\downarrow$ & Est. TPS $\uparrow$ \\
\midrule
Baseline + heuristic & 0.5 & 32.07\% & 6.97 & 2.0481 \\
ReMoE + heuristic    & 0.5 & 34.07\% & 6.51 & 2.1106 \\
\bottomrule
\end{tabular}
\end{center}

\subsection{Routing Trajectory of DeepSeek-V2-Lite-Chat}
\label{app:chat_trajectory}

\textbf{Motivation.}
Throughout the main paper, we characterize the
short-horizon locality gap on the base DeepSeek-V2-Lite
checkpoint. A natural concern is whether this gap is an
artifact of the base pretraining stage alone, and whether
standard supervised fine-tuning (SFT) or instruction
tuning is by itself sufficient to remove it. The question
matters in practice because the MoE checkpoints that are
actually shipped to end users are almost always
chat/instruction-tuned variants rather than raw base
models, and SFT is known to perturb both expert utilization
and per-token routing distributions. If chat tuning already
induced sticky short-horizon reuse on its own, the
deployment-time motivation for a dedicated locality-aware
adaptation step like ReMoE would be substantially weaker.

\textbf{Setup.}
To probe this directly, we apply exactly the same
teacher-forced routing-trace protocol used in
Fig.~\ref{fig:trajectory} to the official
DeepSeek-V2-Lite-Chat checkpoint, i.e., the
SFT/instruction-tuned counterpart of the base model used
throughout the main experiments. We feed the same fixed
token sequence, record the Top-$K$ expert indices selected
by the chat model's router at each decoding step, and
visualize the trajectory at the same MoE layer
(layer~21) as in Fig.~\ref{fig:trajectory}, so that the
chat and base trajectories are directly comparable. As in
the main figure, teacher forcing fixes the input token
sequence and the time index, so any difference between
trajectories reflects a change in routing policy rather
than generation drift.

\begin{figure}[h]
  \centering
  \includegraphics[width=\textwidth]{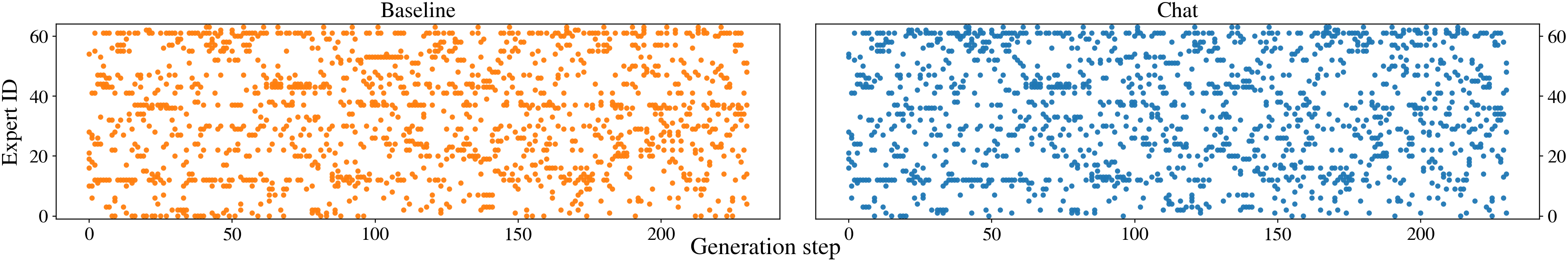}
  \caption{\textbf{Routing trajectory of
  DeepSeek-V2-Lite-Chat under teacher forcing
  (layer~21).}
  Each point marks one of the Top-$K$ experts chosen at
  the corresponding decoding step. The chat-tuned model
  spreads its routing decisions across a wide set of
  experts and switches selection nearly every step,
  without the extended horizontal reuse streaks that
  ReMoE produces on the base model in
  Fig.~\ref{fig:trajectory}. Qualitatively, the chat
  trajectory is closely aligned with the
  baseline panel of Fig.~\ref{fig:trajectory}
  rather than with the ReMoE panel.}
  \label{fig:chat_trajectory}
\end{figure}

\textbf{Observation.}
The chat trajectory in Fig.~\ref{fig:chat_trajectory}
remains token-wise dispersed and exhibits frequent
step-to-step expert switches, with no obvious longer
reuse streaks. The visual pattern is essentially
indistinguishable from the baseline panel of
Fig.~\ref{fig:trajectory}, and clearly different from
the locality-stabilized trace produced by ReMoE on the
same layer. In short, the short-horizon locality gap
identified on the base model carries over to the
SFT/chat-tuned model essentially intact.

\textbf{Implication.}
Two consequences follow for ReMoE's positioning. First,
the deployment problem ReMoE targets is real for the
model class that is actually deployed: chat/instruction
variants inherit the same offloading-unfriendly routing
behavior as their base checkpoints, so the cache-locality
bottleneck is not bypassed by the standard
pretraining$\rightarrow$SFT recipe. Second, this is
consistent with the CE-only ablation in
Table~\ref{tab:routing_locality} and Table~\ref{tab:vllm_serving}, where a router-only continued
fine-tuning pass with standard next-token cross-entropy
also fails to recover the locality benefit. Together,
these two pieces of evidence point in the same direction:
generic supervised adaptation, whether at the full-model
level (chat/SFT) or restricted to the router (CE-only),
does not by itself produce the stable short-horizon
expert working set needed for memory-constrained expert
offloading. The locality gain reported in this paper is
specifically attributable to ReMoE's locality-aware
objective, rather than to any router or model adaptation
that happens to consume the same data.

\newpage

\subsection{Generalization Results on Qwen1.5-MoE-A2.7B}
\label{app:qwen_generalization}

\begin{center}
\captionof{table}{\textbf{Generalization on Qwen1.5-MoE-A2.7B:} routing and LM metrics.
Rel.\ $\Delta$ is $(\text{ReMoE}-\text{Baseline})/\text{Baseline}$.}
\label{tab:qwen_routing_app}
\small
\setlength{\tabcolsep}{5pt}
\renewcommand{\arraystretch}{1.05}
\begin{tabular}{lcccc}
\toprule
Method & PPL$\downarrow$ & EOR$\uparrow$ & Entropy$\downarrow$ & CV$\uparrow$ \\
\midrule
Baseline & 2.7104 & 0.1695 & 0.99996 & 0.0174 \\
ReMoE    & 2.3659 & 0.2156 & 0.99861 & 0.1109 \\
Rel.\ $\Delta$ & $-12.7$\% & +27.2\% & $-0.14$\% & +537.4\% \\
\bottomrule
\end{tabular}
\end{center}

\vspace{8pt}

\begin{center}
\captionof{table}{\textbf{Generalization on Qwen1.5-MoE-A2.7B:} downstream benchmarks (lm-eval).
Scores are mean $\pm$ stderr reported by \texttt{lm\_eval}. $\Delta$ is ReMoE$-$Baseline in percentage points.}
\label{tab:qwen_capability_app}
\small
\setlength{\tabcolsep}{6pt}
\renewcommand{\arraystretch}{1.05}
\begin{tabular}{lccc}
\toprule
Benchmark (metric) & Baseline & ReMoE & $\Delta$ (pp) \\
\midrule
GSM8K (EM, strict) & 16.53 $\pm$ 1.02 & 18.14 $\pm$ 1.38 & $+1.61$ \\
GSM8K (EM, flex)   & 60.58 $\pm$ 1.35 & 61.11 $\pm$ 1.34 & $+0.53$ \\
HumanEval (pass@1) & 35.37 $\pm$ 3.74 & 35.98 $\pm$ 3.76 & $+0.61$ \\
MMLU (acc)         & 61.10 $\pm$ 0.39 & 61.20 $\pm$ 0.39 & $+0.10$ \\
\bottomrule
\end{tabular}
\end{center}


\section{Sensitivity Analysis}
\label{app:sensitivity}

This appendix reports a sensitivity study of ReMoE's router-only fine-tuning objective using the same training recipe as in the main paper, while varying (i) the reuse regularizer weight $\lambda_{\mathrm{reuse}}$, (ii) the trust-anchor weight $\lambda_{\mathrm{KL}}$, and (iii) the lag-step set $\mathcal{D}$ used by the temporal-locality objective.
All runs are evaluated at the final step (2{,}000) under the same validation protocol used throughout the paper.

\textbf{Default configuration.}
Unless otherwise stated, we use $\lambda_{\mathrm{reuse}}=0.2$, $\lambda_{\mathrm{KL}}=0.45$, and the default lag set
$\mathcal{D}_{\mathrm{main}}=\{1,2,4,8,16\}$.
Other locality-related hyperparameters are kept identical across runs.

\subsection{Sensitivity to $\lambda_{\mathrm{reuse}}$ and $\lambda_{\mathrm{KL}}$}
\label{app:sensitivity_hyper}

We first vary $\lambda_{\mathrm{reuse}}$ while fixing $\lambda_{\mathrm{KL}}=0.45$ and $\mathcal{D}=\mathcal{D}_{\mathrm{main}}$,
and then vary $\lambda_{\mathrm{KL}}$ while fixing $\lambda_{\mathrm{reuse}}=0.2$ and $\mathcal{D}=\mathcal{D}_{\mathrm{main}}$.
Figure~\ref{fig:sens_hyper} summarizes both sweeps.

\textbf{Varying $\lambda_{\mathrm{reuse}}$.}
As shown in Figure~\ref{fig:sens_hyper_reuse}, increasing $\lambda_{\mathrm{reuse}}$ leads to a consistent increase in the reuse score
(\texttt{eval\_reuse}: 0.283 $\rightarrow$ 0.370), indicating that $\lambda_{\mathrm{reuse}}$ directly controls expert reuse in this range.
Meanwhile, the trust-anchor deviation (\texttt{eval\_trust\_kl}) increases with larger $\lambda_{\mathrm{reuse}}$
(0.0098 $\rightarrow$ 0.0667), reflecting a larger distributional drift from the frozen reference router.
Across the sweep, language-model validation metrics remain stable
(PPL $\approx$ 3.22--3.24; Acc@1 $\approx$ 71.7--71.8), suggesting that the reuse gain is achieved without degrading capability in these runs.

\textbf{Varying $\lambda_{\mathrm{KL}}$.}
Figure~\ref{fig:sens_hyper_kl} shows that $\lambda_{\mathrm{KL}}$ strongly controls the router's distributional drift:
removing the anchor ($\lambda_{\mathrm{KL}}=0$) yields a large trust deviation (\texttt{eval\_trust\_kl}=0.308) and slightly worse PPL (3.264),
while increasing $\lambda_{\mathrm{KL}}$ further reduces the match to the reference router
(\texttt{eval\_trust\_kl} decreases to 0.016 at $\lambda_{\mathrm{KL}}=0.7$).
Consistent with the anchor constraining optimization freedom, reuse decreases as $\lambda_{\mathrm{KL}}$ grows
(\texttt{eval\_reuse}: 0.386 at $\lambda_{\mathrm{KL}}=0$ to 0.321 at $\lambda_{\mathrm{KL}}=0.7$).
Overall, $\lambda_{\mathrm{KL}}=0.45$ yields strong reuse gains with moderate drift in this setting.

\begin{figure}[t]
  \centering
  \begin{subfigure}[t]{0.49\linewidth}
    \centering
    \includegraphics[width=\linewidth]{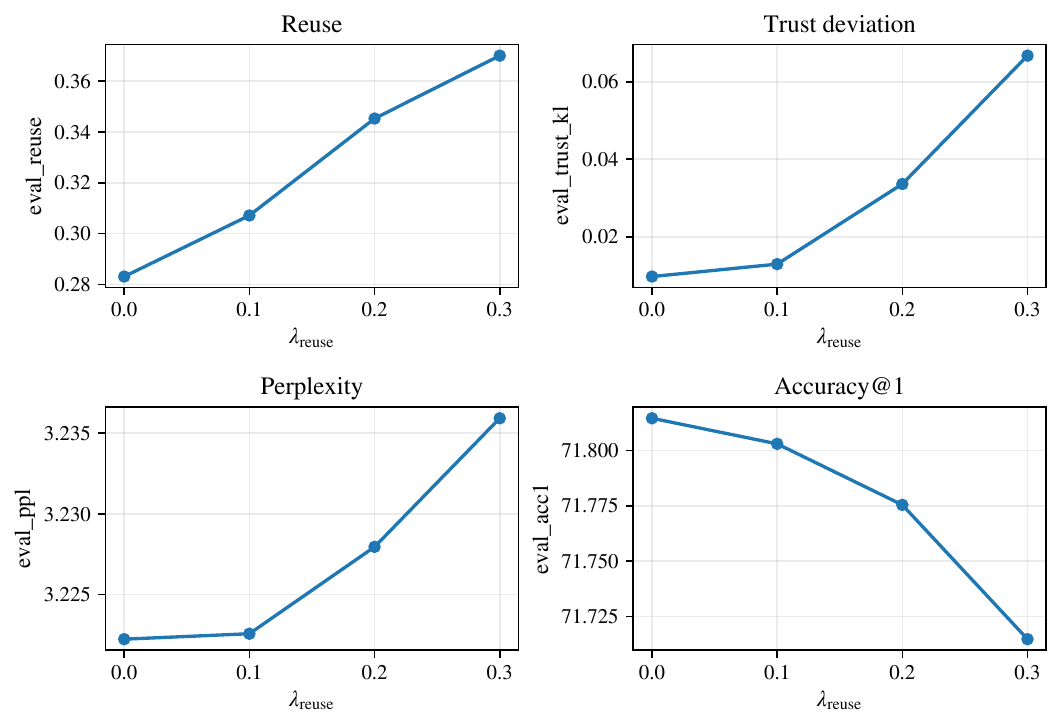}
    \caption{$\lambda_{\mathrm{reuse}}$ sweep ($\lambda_{\mathrm{KL}}=0.45$, $\mathcal{D}=\{1,2,4,8,16\}$).}
    \label{fig:sens_hyper_reuse}
  \end{subfigure}\hfill
  \begin{subfigure}[t]{0.49\linewidth}
    \centering
    \includegraphics[width=\linewidth]{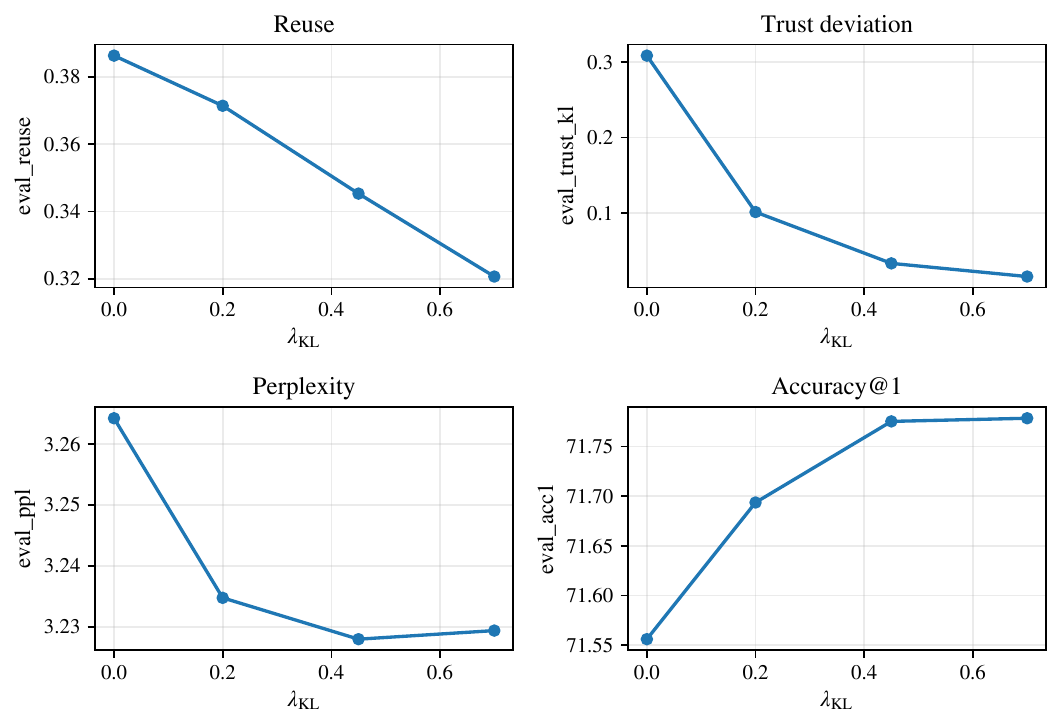}
    \caption{$\lambda_{\mathrm{KL}}$ sweep ($\lambda_{\mathrm{reuse}}=0.2$, $\mathcal{D}=\{1,2,4,8,16\}$).}
    \label{fig:sens_hyper_kl}
  \end{subfigure}
  \caption{\textbf{Sensitivity to $\lambda_{\mathrm{reuse}}$ and $\lambda_{\mathrm{KL}}$.}
  Increasing $\lambda_{\mathrm{reuse}}$ improves reuse (EOR; reported as \texttt{eval\_reuse}) at the cost of larger trust deviation (\texttt{eval\_trust\_kl}),
  while increasing $\lambda_{\mathrm{KL}}$ reduces drift but also constrains reuse.}
  \label{fig:sens_hyper}
\end{figure}

\subsection{Sensitivity to the lag-step set $\mathcal{D}$ (short-lag variant)}
\label{app:sensitivity_lag}

We compare the default lag set $\mathcal{D}_{\mathrm{main}}=\{1,2,4,8,16\}$ against a shorter set
$\mathcal{D}_{\mathrm{short}}=\{1,2,4,8\}$, keeping $\lambda_{\mathrm{reuse}}=0.2$ and $\lambda_{\mathrm{KL}}=0.45$ fixed.
Since the comparison involves only two configurations, we summarize the results in Table~\ref{tab:sens_lag}.
All metrics remain extremely close between the two settings, indicating that replacing $\mathcal{D}_{\mathrm{main}}$ by $\mathcal{D}_{\mathrm{short}}$
does not materially change the outcome in this experiment, and the dominant effect is already captured by short-horizon constraints in $\{1,2,4,8\}$.

\begin{table}[t]
  \centering
  \small
  \setlength{\tabcolsep}{6pt}
  \begin{tabular}{lccc}
    \toprule
    Metric & Default $\mathcal{D}_{\mathrm{main}}$ & Short $\mathcal{D}_{\mathrm{short}}$ & Ratio (Short / Default) \\
    \midrule
    PPL (\texttt{eval\_ppl})           & 3.2280 & 3.2298 & 1.0006 \\
    Acc@1 (\texttt{eval\_acc1})        & 71.775 & 71.781 & 1.0001 \\
    Reuse (\texttt{eval\_reuse})       & 0.3453 & 0.3440 & 0.9963 \\
    Trust dev. (\texttt{eval\_trust\_kl}) & 0.03363 & 0.03320 & 0.9872 \\
    CV (\texttt{eval\_cv})             & 0.16085 & 0.15943 & 0.9912 \\
    \bottomrule
  \end{tabular}
  \caption{\textbf{Sensitivity to the lag-step set $\mathcal{D}$.}
  Comparison between the default lag set $\mathcal{D}_{\mathrm{main}}=\{1,2,4,8,16\}$ and the short-lag set $\mathcal{D}_{\mathrm{short}}=\{1,2,4,8\}$
  with $\lambda_{\mathrm{reuse}}=0.2$ and $\lambda_{\mathrm{KL}}=0.45$. Ratios close to 1.0 indicate minimal change.}
  \label{tab:sens_lag}
\end{table}

\end{document}